 \newcommand{\figcaption}[1]{\def\@captype{figure}\caption{#1}} 
 \renewcommand{\@biblabel}[1]{[#1]} 
\newtheorem{prop}[theorem]{Proposition}
\newtheorem{defin}[theorem]{Definition}
\def\mcl#1{\mathcal{#1}}
\def\blacket#1{\left\langle #1\right\rangle}
\def\hil{\mcl{H}}
\def\nn{\nonumber}
\def\opn{\operatorname}
\def\iu{\mathrm{i}}
\def\VVert{\vert\hspace{-1pt}\vert\hspace{-1pt}\vert}
\DeclareMathOperator*{\Dom}{\mcl{D}}
\def\blackets#1{\langle #1\rangle}
\def\bblacketg#1{\bigg\langle #1\bigg\rangle}
\begin{document}

\title{Krylov Subspace Method for Nonlinear Dynamical Systems with Random Noise}

\author{\name Yuka Hashimoto \email yuka.hashimoto.rw@hco.ntt.co.jp \\
       \addr NTT Network Technology Laboratories, NTT Corporation\\
       3-9-11, Midori-cho, Musashinoshi, Tokyo, 180-8585, Japan / \\
       Graduate School of Science and Technology, Keio University\\
       3-14-1, Hiyoshi, Kohoku, Yokohama, Kanagawa, 223-8522, Japan
       \AND
       \name Isao Ishikawa \email ishikawa.isao.zx@ehime-u.ac.jp\\
       \addr Faculty of Science, Ehime University\\
       2-5, Bunkyo-cho, Matsuyama, Ehime, 790-8577, Japan / \\
       Center for Advanced Intelligence Project, RIKEN\\
       1-4-1, Nihonbashi, Chuo-ku, Tokyo 103-0027, Japan
       \AND
       \name Masahiro Ikeda \email masahiro.ikeda@riken.jp \\
       \addr Center for Advanced Intelligence Project, RIKEN\\
       1-4-1, Nihonbashi, Chuo-ku, Tokyo 103-0027, Japan / \\
       Faculty of Science and Technology, Keio University\\
       3-14-1, Hiyoshi, Kohoku, Yokohama, Kanagawa, 223-8522, Japan
       \AND
       \name Yoichi Matsuo \email yoichi.matsuo.ex@hco.ntt.co.jp \\
       \addr NTT Network Technology Laboratories, NTT Corporation\\ 
       3-9-11, Midori-cho, Musashinoshi, Tokyo, 180-8585, Japan
       \AND 
       \name Yoshinobu Kawahara \email kawahara@imi.kyushu-u.ac.jp\\
       \addr Institute of Mathematics for Industry, Kyushu University\\
       744, Motooka, Nishi-ku, Fukuoka, 819-0395, Japan / \\
       Center for Advanced Intelligence Project, RIKEN\\
       1-4-1, Nihonbashi, Chuo-ku, Tokyo 103-0027, Japan
}

\editor{}

\maketitle

\begin{abstract}
Operator-theoretic analysis of nonlinear dynamical systems has attracted much attention in a variety of engineering and scientific fields, endowed with practical estimation methods using data such as dynamic mode decomposition. In this paper, we address a lifted representation of nonlinear dynamical systems with random noise based on transfer operators, and develop a novel Krylov subspace method for estimating the operators using finite data, with consideration of the unboundedness of operators. For this purpose, we first consider Perron-Frobenius operators with kernel-mean embeddings for such systems. We then extend the Arnoldi method, which is the most classical type of Kryov subspace methods, so that it can be applied to the current case. Meanwhile, the Arnoldi method requires the assumption that the operator is bounded, which is not necessarily satisfied for transfer operators on nonlinear systems. We accordingly develop the shift-invert Arnoldi method for Perron-Frobenius operators to avoid this problem. Also, we describe an approach of evaluating predictive accuracy by estimated operators on the basis of the maximum mean discrepancy, which is applicable, for example, to anomaly detection in complex systems. The empirical performance of our methods is investigated using synthetic and real-world healthcare data.
\end{abstract}

\begin{keywords}
  Nonlinear dynamical system, Transfer operator, Krylov subspace methods, Operator theory, Time-series data
\end{keywords}


\section{Introduction}\label{secintro}
Analyzing nonlinear dynamical systems using data is one of the fundamental but still challenging problems in various engineering and scientific fields. 
Recently, operator-theoretic analysis has attracted much attention for this purpose, with which the behavior of a nonlinear dynamical system is analyzed through representations with transfer operators such as Koopman operators and their adjoint ones, Perron-Frobenius operators \citep{mezic12,kawahara16}. 
Since transfer operators are linear even if the corresponding dynamical systems are nonlinear, 
we can apply sophisticated theoretical results and useful tools of the operator theory, and 
access the properties of dynamics more easily from both theoretical and practical viewpoints. 
This is one of the main advantages of using transfer operators compared with other methods for learning dynamical systems such as using recurrent neural networks (RNNs) and hidden Markov models.
For example, one could consider modal decomposition of nonlinear dynamics by using the spectral analysis in operator theory, which provides the global characteristics of the dynamics and is useful in understanding complex phenomena \citep{kutz13}. This topic has also been recently discussed in machine learning \citep{kawahara16,lusch17,takeishi17-2}.

However, many of the existing works mentioned above are on {\em deterministic} dynamical systems.
Quite recently, the extension of these works to random systems has been addressed in a few works.
The methods for analyzing deterministic systems with transfer operators are extended to cases in which dynamical systems are random \citep{mezic17,takeishi17}.
Also, the transfer operator for a stochastic process in reproducing kernel Hilbert spaces (RKHSs) is defined \citep{klus17}, which provides an approach of analyzing dynamics of random variables in RKHSs.

In this paper, we address a lifted representation of nonlinear dynamical systems with {\em random noise} based on transfer operators, and develop a novel Krylov subspace method for estimating the operator using finite data, with consideration of the unboundedness of operators.
To this end, we first consider Perron-Frobenius operators with kernel-mean embeddings for such systems.
We then extend the Arnoldi method, which is the most classical type of Krylov subspace methods, so that it can be applied to the current case. 
However, although transfer operators on nonlinear systems are not necessarily bounded, the Arnoldi method requires the assumption on the boundedness of an operator. 
We accordingly develop the shift-invert Arnoldi method for the Perron-Frobenius operators to avoid this problem.
Moreover, we consider an approach of evaluating the predictive accuracy with estimated operators on the basis of the maximum mean discrepancy (MMD), which is applicable, for example, to anomaly detection in complex systems. 
Finally, we investigate the empirical performance of our methods using synthetic data and also apply those to anomaly detection with real-world healthcare data.

The remainder of this paper is organized as follows. First, in Section~\ref{sec:background}, we review transfer operators 
and Krylov subspace methods. 
In Section~\ref{sec:pf_kme}, we consider Perron-Frobenius operators with kernel-mean embeddings for nonlinear dynamical systems with random noises. 
In Section~\ref{sec:krylov}, we develop Krylov subspace methods for estimating these operators using data, 
and in Section~\ref{sec:related}, we discuss the connection of our methods to existing methods.
In Section~\ref{secanormal}, we consider an approach of evaluating the prediction accuracy with estimated operators. 
Finally, we empirically investigate the performance of our methods in Section~\ref{sec:result} and conclude the paper in Section~\ref{sec:conclusion}. 
Proofs which are not given after their statements are given in Appendix~\ref{ap:pf_kme}.

\paragraph{Notations}
Standard capital letters and ornamental capital letters denote the infinite dimensional linear operators.
Bold letters denote the matrices (finite dimensional linear operators) or finite dimensional vectors.
Calligraphic capital letters and italicized Greek capital letters denote sets.
The inner product and norm in $\hil_k$ are denoted as $\blacket{\cdot,\cdot}_k$ and $\Vert \cdot\Vert_k$, respectively.
The operator norm of a bounded linear operator $A$ in $\hil_k$, which is defined as $\sup_{v\in\hil_k,\Vert v\Vert_k=1}\Vert Av\Vert_k$ is denoted as $\VVert A\VVert_k$.
While, the Euclid norm in $\mathbb{C}^S$ for $S\in\mathbb{N}$ is denoted as $\Vert\cdot\Vert$, and the operator norm of a matrix $\mathbf{A}$ is denoted as $\VVert \mathbf{A}\VVert$.

The typical notations in this paper are listed in Table~\ref{tab1}.

\begin{table}[t]
\caption{Notation table}\label{tab1}\vspace{.3cm}
 \begin{tabularx}{\textwidth}{|c|X|}
\hline
$(\varOmega,\mcl{F},P)$  & A measurable space (sample space) with a probability measure $P$\\
$(\mcl{X},\mcl{B})$ & A Borel measurable and locally compact Hausdorff vector space (state space)\\
$x_t$ & A random variable from $\varOmega$ to $\mcl{X}$ represents the observation at $t$\\
$\{\xi_t\}$ & An i.i.d.\@ stochastic process corresponds to the random noise, where $\xi_t:\varOmega\to\mcl{X}$\\
$k$ & A positive-definite continuous, bounded and $c_0$-universal kernel on $\mcl{X}$\\
$\phi$ & The feature map endowed with $k$\\
$\hil_k$ & The RKHS endowed with $k$\\
$\mcl{M}(\mcl{X})$ & The set of all finite complex-valued regular Borel measures on $\mcl{X}$\\
$\Phi$ & The kernel mean embedding $\mcl{M}(\mcl{X})\to\hil_k$ defined by $\mu\mapsto\int_{x\in\mcl{X}}\phi(x)\;d\mu(x)$\\
$K$ & A Perron-Frobenius operator\\
$\Dom(A)$ & The domain of a linear operator $A$\\
$\varLambda(A)$ & The spectrum of an $A$\\
$\mcl{W}(A)$ & The numerical range of an $A$ on $\hil$ defined by $\{\blacket{Av,v}\mid\ v\in\hil,\ \Vert v\Vert=1\}$\\
$s_{\opn{min}}(\mathbf{A})$ & The minimal singular value of a matrix $\mathbf{A}$ defined by $\min_{\Vert \mathbf{w}\Vert_=1}\Vert \mathbf{Aw}\Vert$\\
$\gamma$ & A parameter to transform $K$ to a bounded bijective operator $(\gamma I-K)^{-1}$ which is not in $\varLambda(K)$\\
$\{\tilde{x}_0,\tilde{x}_1,\ldots\}$ & Observed time-series data\\
$S$ & A natural number that represents the dimension of the Krylov subspace\\
$N$ & A natural number that represents the amount of observed data used for the estimation\\
$\mu_{t,N}$ & The empirical measure generated by finite observed data $\{\tilde{x}_t,\ldots,\tilde{x}_{t+(S+1)(N-1)}\}$\\
$\mu_{t}$ & The weak limit of $\mu_{t,S}$ in $\mcl{M}(\mcl{X})$\\
$\mcl{V}(A,v)$ & The Krylov subspace of a linear operator $A$ and a vector $v$\\
$Q_S$ & The linear operator from $\mathbb{C}^{S}$ to $\hil_k$ composed of the orthonormal basis of the Krylov subspace\\
$\mathbf{R}_S$ & The $S$ times $S$ matrix which transforms the coordinate into the one with the orthonormal basis\\
$\tilde{\mathbf{K}}_S$ & The estimation of $K$ in an $S$-dimensional Krylov subspace\\
$\tilde{\mathbf{L}}_S$ & The estimation of $(\gamma I-K)^{-1}$ in an $S$-dimensional Krylov subspace\\
$a_{t,S}$ & The abnormality at $t$ computed with $\tilde{\mathbf{K}}_S$\\
\hline
 \end{tabularx}
\end{table}
\section{Background}
\label{sec:background}

\subsection{Transfer operators}
\label{transfer_review}


Consider a {\em deterministic} dynamical system
\begin{equation*}
x_{t+1}=h(x_t),
\end{equation*}
where $h\colon \mcl{X}\to\mcl{X}$ is a map, $\mcl{X}$ is a state space and 
$x_t\in\mcl{X}$.
Then, the corresponding {\em Koopman operator}~\citep{koopman31}, which is denoted as $\mathscr{K}$, is a linear operator in some subspace $\mcl{M}\subseteq \{g\colon\mcl{X}\to\mcl{X}\}$, defined by
\begin{equation*}
\mathscr{K}g:=g\circ h    
\end{equation*}
for $g\in\mcl{M}$.
From the definition, $\mathscr{K}$ represents the time evolution of the system as $(\mathscr{K}^ng)(x_0)=g(h(\ldots h(x_0)))=g(x_n)$.
Since the Koopman operator is linear even when the dynamical system $h$ is nonlinear, the operator theory is valid for analyzing it. And, the adjoint of Koopman operator is called {\em Perron-Frobenius operator}.
The concept of the RKHS is combined with transfer operators, and Perron-Frobenius operators in an RKHS are addressed~\citep{kawahara16,ishikawa18}.
One of the advantages of using transfer operators in RKHSs is that they can describe dynamical systems defined in non-Euclidean spaces.
Let $\hil_k$ be the RKHS endowed with a positive definite kernel $k$, and let $\phi:\mathcal{X}\rightarrow\mathcal{H}_k$ be the feature map.
Then, the Perron-Frobenius operator in the RKHS for $h\colon\mcl{X}\to\mcl{X}$, which is denoted by $\mathscr{K}_{\opn{RKHS}}$, 
is a linear operator in $\hil_k$ defined by
\begin{equation*}
\mathscr{K}_{\opn{RKHS}}\phi(x):=\phi(h(x))
\end{equation*}
for $\phi(x)\in\opn{Span}\{\phi(x)\mid x\in\mcl{X}\}$.

Transfer operator has also been discussed for cases in which a dynamical system is {\em random}.
Let $(\mcl{X},\mcl{B},\mu)$ and $(\varOmega,\mcl{F},P)$ be probability spaces.
The following random system is considered~\citep{mezic17,takeishi17}:
\begin{equation*}
x_{t+1}=\pi(t,\omega,x_t),
\end{equation*}
where $\pi\colon \mathbb{Z}_{\ge 0}\times\varOmega\times \mcl{X}\to\mcl{X}$ is a map and $x_t\in\mcl{X}$.
Then, the Koopman operator, which is denoted as $\bar{\mathscr{K}}_t$, is a linear operator in $\mcl{L}^2(\mathcal{X})$ and defined as
\begin{equation*}
\bar{\mathscr{K}}_tg:=\int_{\omega\in\varOmega}g(\pi(t,\omega,\cdot))\;dP(\omega)
\end{equation*}
for $g\in\mcl{L}^2(\mcl{X})$.
Also, Perron-Frobenius operators in RKHSs for {\em a stochastic process} $\{x_t\}$ on $(\mcl{X},\mcl{B},\mu)$
whose probability density functions are $\{p_t\}$ are considered~\citep{klus17}.
The Perron-Frobenius operator in an RKHS $\hil_k$, which is denoted as $\bar{\mathscr{K}}_{\opn{RKHS},t}$, is a linear operator in $\hil_k$ and defined by 
\begin{equation*}
\bar{\mathscr{K}}_{\opn{RKHS},t}\mathscr{E}p_t:=\mathscr{U}p_t,
\end{equation*}
where $\mathscr{E}$ and $\mathscr{U}$ are respectively the embeddings of probability densities to $\hil_k$
defined as $q\mapsto\int_{x\in\mcl{X}}\phi(x)q(x)\;d\mu(x)$ and $q\mapsto\int_{x\in\mcl{X}}\int_{y\in\mcl{X}}\phi(y)p(y| x)q(x)\;d\mu(y)d\mu(x)$, and $p$ is a function satisfying $P(x_{t+1}\in A\mid \{x_t=x\})=\int_{y\in A}p(y| x)\;d\mu(y)$.

 The Koopman and Perron-Frobenius operators are {\em defined in infinite dimensional spaces and linear}, whereas original systems are {\em defined in finite dimensional spaces and nonlinear}.
The full nonlinear dynamics can be captured within the linear operator, which allows us to apply techniques for linear operators such as Krylov subspace methods and modal decomposition.
Meanwhile, since the operators are defined in infinite dimensional space, we need fine arguments with mathematics for constructing and analyzing algorithms related to these operators in general.

\subsection{Unbounded linear operators}
First, we review the definition of a linear operator in a Hilbert space $\hil$.
\begin{defin}
Let $\mathcal{S}$ be a dense subset of $\hil$.
A linear operator $A$ in $\hil$ is a linear map $A:\mathcal{S}\to\hil$.
The set $\mathcal{S}$, which is denoted as $\Dom(A)$, is called the domain of $A$.
If there exists $C>0$ such that the operator norm of $A$, which is defined as 
${\VVert A\VVert}:=\sup_{v\in\hil,\Vert v\Vert=1}\Vert Av\Vert$ is bounded by $C$,
then $A$ is called bounded.
\end{defin}
For a linear operator $A$, the spectrum and numerical range are defined as follows:
\begin{defin}
Let $\Gamma(A)$ be the set of ${\gamma}\in\mathbb{C}$ such that $({\gamma} I-A):\Dom(A)\to\hil$ is 
bijective and $(\gamma I-A)^{-1}$ is bounded.
The spectrum of $A$ is the set $\mathbb{C}\setminus\Gamma(A)$, which is denoted as $\varLambda(A)$.
Moreover, the numerical range of $A$ is the set $\{\blacket{Av,v}\in\mathbb{C}\mid v\in\Dom(A),\ \Vert v\Vert=1\}$, which is denoted as $\mcl{W}(A)$.
\end{defin}
If $A$ is bounded, it can be shown that $\varLambda(A)$ is nonempty and compact~\citep[Theorem 2.1, Theorem 2.2]{kubrusly12}.
Also, by Toeplitz-Hausdorff theorem, it can be shown that $\mcl{W}(A)$ is bounded and convex~\citep{mcintosh78}.
The relation between $\varLambda(A)$ and $\mcl{W}(A)$ is characterized by the inclusion $\varLambda(A)\subseteq\overline{\mcl{W}(A)}$.
However, if $A$ is unbounded, neither $\varLambda(A)$ nor $\mcl{W}(A)$ is always bounded.

\subsection{Krylov subspace methods}
\label{krylov_review}

Krylov subspace methods are numerical methods for estimating the behavior of a linear operator by projecting it onto a finite dimensional subspace, called Krylov subspace. 
Let $A$ be a linear operator in Hilbert space $\hil$ and $v\in\hil$.
Then, the Krylov subspace of $A$ and $v$, which is denoted by $\mcl{V}_S(A,v)$, is an $S$-dimensional subspace
\begin{equation*}
\opn{Span}\{v,Av,\ldots,A^{S-1}v\}.
\end{equation*}
Krylov subspace methods are often applied to compute the spectrum of $\mathbf{A}$, $\mathbf{A}^{-1}\mathbf{v}$, or $f(\mathbf{A})\mathbf{v}$ for a given large and sparse $N\times N$ matrix $\mathbf{A}$, vector $\mathbf{v}\in\mathbb{C}^N$ and function $f$~\citep{krylov31,hestens52,saad83,gallopoulos92,moret04}.
The theoretical extensions of Krylov subspace methods for linear operators in infinite dimensional Hilbert spaces are explored in~\cite{guttel10,grimm12,gockler14,hashimoto_jjiam} to deal with matrices that are finite dimensional approximations of infinite dimensional linear operators.

The Arnoldi method is a classical and most commonly-used Kryov subspace method.
With the Arnoldi method, the Krylov subspace $\mcl{V}_S(A,v)$ is first constructed, and $A$ is projected onto it.
For a matrix $\mathbf{A}$ and vector $\mathbf{v}$, since the basis of $\mcl{V}_S(\mathbf{A},\mathbf{v})$ can be computed only by matrix-vector products,
the projection of $\mathbf{A}$ is also obtained only with matrix-vector products.
Note that the computational cost of the matrix-vector product is less than or equal to $O(N^2)$, which is less computationally expensive than computing the spectrum of $\mathbf{A}$, $\mathbf{A}^{-1}$ or $f(\mathbf{A})$ directly.

On the other hand, $\mathbf{A}$ is often the matrix approximation of an unbounded $A$, that is, the spatial discretization of $A$.
Theoretically, if $\mathbf{A}$ is an unbounded operator, $\mathbf{A}^iv$ for $i=1,\ldots,S-1$ cannot always be defined, 
and practically, although $\mathbf{A}$ is a matrix (bounded), the performance of the Arnoldi method for $\mathbf{A}$ degrades
due to the unboundedness of the original $A$.
To overcome this issue, 
the shift-invert Arnoldi method, that constructs the Krylov subspace $\mcl{V}_S((\gamma I-A)^{-1},v)$,
where $\gamma$ is not in the spectrum of $A$, has been investigated.
Since $(\gamma I-A)^{-1}$ is bounded, $(\gamma I-A)^{-i}v$ for $i=1,\ldots,S-1$ is always defined.
Thus, the Krylov subspace $\mcl{V}_S((\gamma I-A)^{-1},v)$ can be constructed.
This improve the performance for matrix $\mathbf{A}$, which is a matrix approximation of unbounded $A$.

Moreover, 
the application of the Arnoldi method to estimating transfer operators has been discussed for the deterministic case $\mathscr{K}_{\opn{RKHS}}$~\citep{kawahara16} and for the random case $\bar{\mathscr{K}}_t$~\citep{mezic17}.
An advantage of the Krylov subspace methods for estimating transfer operators is that they require one time-series dataset embedded by one observable function or one feature map, which matches the case of using an RKHS.
Meanwhile, the largest difference between the Krylov subspace methods mentioned in the preceding paragraphs  and those for transfer operators is that the operator to be estimated is given beforehand or not.
That is, calculations appear in Krylov subspace methods for transfer operators need to be carried out without knowing the operators.

\section{Perron-Frobenius Operators with Kernel-Mean Embeddings}
\label{sec:pf_kme}

Consider the following discrete-time nonlinear dynamical systems with
random noise in $\mcl{X}$:
\begin{equation}
 x_{t+1}=h(x_t)+\xi_{t},\label{model}
\end{equation}
where $t\in \mathbb{Z}_{\ge 0}$, $(\varOmega,\mcl{F})$ is a measurable space (corresponding to a sample space), $(\mcl{X},\mcl{B})$ is a Borel measurable and locally compact Hausdorff vector space (corresponding to a state space), $x_t$ and $\xi_t$ are random variables from sample space $\varOmega$ to state space $\mcl{X}$, and $h\colon\mcl{X}\to\mcl{X}$ is a map which can be nonlinear.
Let $P$ be a probability measure on $\varOmega$.
Examples of locally compact Hausdorff space are $\mathbb{R}^d$ and Riemannian manifolds.
Assume that $\xi_t$ with $t\in \mathbb{Z}_{\ge 0}$ is an i.i.d.\@ stochastic process and is independent of $x_t$.
The $\xi_t(\omega)$ corresponds to random noise in $\mcl{X}$.
We consider an RKHS on $\mcl{X}$.
Let $k\colon \mcl{X}\times\mcl{X}\to\mathbb{C}$ be a 
positive-definite kernel on $\mcl{X}$, i.e., $k$ satisfies

\begin{enumerate}
    \item $k(x,y)=\overline{k(y,x)}$ for $x,y\in\mcl{X}$,
    \item $\sum_{i,j=1}^n c_i\overline{c_j}k(x_i,x_j)\ge 0$ for $n\in\mathbb{N}$, $c_i\in\mathbb{C}$, and $x_i\in\mcl{X}$.
\end{enumerate}
The corresponding feature map is denoted by $\phi$, which is defined as $\phi(x)=k(x,\cdot)$.
Let $\hil_{k,0}:=\opn{Span}\{\phi(x)\mid x\in\mcl{X}\}$ and 
$\blacket{\cdot,\cdot}_k$ be an inner product on $\hil_{k,0}$ defined as
\begin{equation*}
   \bblacketg{\sum_{i=1}^nc_i\phi(x_i),\sum_{j=1}^mc_j\phi(x_j)}_k=\sum_{i=1}^n\sum_{j=1}^mc_i\overline{c_j}k(x_i,x_j).
\end{equation*}
The completion of $\hil_{k,0}$ is called a reproducing kernel Hilbert space (RKHS), which is denoted as $\hil_k$. 
In this paper, we assume that $k$ is continuous, bounded and $c_0$-universal, i.e., $\phi(x)\in\mcl{C}_0(\mcl{X})$ for all $x\in\mcl{X}$ and $\hil_k$ is dense in $\mcl{C}_0(\mcl{X})$.
Here, $\mcl{C}_0(\mcl{X})$ is the space of all continuous functions vanish at infinity \citep{sriperumbudur11}.
For example, the Gaussian kernel $e^{-c\Vert x-y\Vert^2}$ and
the Laplacian kernel $e^{-c\Vert x-y\Vert_1}$ with $c>0$ for $x,y\in \mathcal{X}$ with $\mcl{X}=\mathbb{R}^d$ are continuous and bounded $c_0$-universal kernels.

Now, we consider the transformation of the random variables in dynamical system~\eqref{model} into probability measures to capture the time evolution of the system starting from several initial states.
That is, random variable $x$ is transformed into probability measure $x_*P$,
where $x_*P$ denotes the push forward measure of $P$ with respect to $x$, defined by
$x_*P(B)=P(x^{-1}(B))$ for $B\in\mcl{B}$.
This transformation replaces the nonlinear relation $h$ between $x_t$ and $x_{t+1}$ with a linear one between probability measures.
Concretely, let $\beta_t\colon\mcl{X}\times\varOmega\to\mcl{X}$ be a map defined by $(x,\omega)\mapsto h(x)+\xi_t(\omega)$.
Then, a linear map $\mu\mapsto {\beta_t}_*(\mu\otimes P)$ is considered for a probability measure $\mu$, instead of $h$.
Also, we embed the probability measures into Hilbert space $\hil_k$, which defines an inner product between probability measures, to apply the operator theory.
Referring to \cite{klus17}, this embedding is possible by the kernel mean embedding~\citep{kernelmean} as follows.
Let $\mcl{M}(\mcl{X})$ be the set of all finite {complex-valued regular} Borel measures on $\mcl{X}$.
Then, the kernel mean embedding $\Phi\colon\mcl{M}(\mcl{X})\to\hil_k$ is defined by 
$\mu\mapsto\int_{x\in\mcl{X}}\phi(x)\;d\mu(x)$.

As a result, the Perron-Frobenius operator for dynamical system~\eqref{model} is defined with $\beta_t$ and the kernel mean embedding $\Phi$ as follows:
\begin{defin}
 The Perron-Frobenius operator for the system~\eqref{model},
 $K:\Phi(\mcl{M}(\mcl{X}))\to \hil_{k}$, is defined as
 \begin{equation}
 K\Phi(\mu):=\Phi({\beta_t}_*(\mu\otimes P)).\label{def}
 \end{equation}
\end{defin}
That is, $K$ transfers the measure generated by $x_t$ to that by $x_{t+1}$.
In fact, the following lemma holds.
\begin{lemma}\label{le:transfer}
The relation $K\Phi({x_t}_*P)=\Phi({x_{t+1}}_*P)$ holds.
\end{lemma}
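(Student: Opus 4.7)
The plan is to unfold the definition of $K$ on the left-hand side, and then show that the resulting push-forward measure on $\mcl{X}$ coincides with ${x_{t+1}}_*P$. Since applying $\Phi$ to equal measures yields equal elements of $\hil_k$, equality of the two measures is enough.

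By the definition in~\eqref{def}, we have
\begin{equation*}
K\Phi({x_t}_*P) = \Phi\bigl({\beta_t}_*({x_t}_*P \otimes P)\bigr),
\end{equation*}
so the whole content of the lemma reduces to the measure-theoretic identity ${\beta_t}_*({x_t}_*P \otimes P) = {x_{t+1}}_*P$ on $(\mcl{X},\mcl{B})$. I would verify this by testing both sides against an arbitrary Borel set $A\in\mcl{B}$ (or, equivalently, against an arbitrary bounded measurable $f$) and computing the two integrals explicitly.

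For the right-hand side, $x_{t+1} = h(x_t) + \xi_t$ gives
\begin{equation*}
{x_{t+1}}_*P(A) = \int_{\varOmega} \mathbf{1}_A\bigl(h(x_t(\omega)) + \xi_t(\omega)\bigr)\, dP(\omega).
\end{equation*}
Now invoke the assumption that $\xi_t$ is independent of $x_t$: the joint law of the pair $(x_t,\xi_t):\varOmega\to\mcl{X}\times\mcl{X}$ factors as ${x_t}_*P\otimes {\xi_t}_*P$, and since $\{\xi_s\}$ is i.i.d.\ (in particular $\xi_t$ has the same law as what I will use below), this integral can be rewritten via the change-of-variables formula as
\begin{equation*}
\int_{\mcl{X}\times\mcl{X}} \mathbf{1}_A(h(x)+y)\, d({x_t}_*P)(x)\, d({\xi_t}_*P)(y).
\end{equation*}
For the left-hand side, by definition of the push-forward and Fubini,
\begin{equation*}
{\beta_t}_*({x_t}_*P \otimes P)(A) = \int_{\mcl{X}}\int_{\varOmega} \mathbf{1}_A\bigl(h(x)+\xi_t(\omega)\bigr)\, dP(\omega)\, d({x_t}_*P)(x),
\end{equation*}
and the inner integral equals $\int_{\mcl{X}}\mathbf{1}_A(h(x)+y)\, d({\xi_t}_*P)(y)$ by another change of variables. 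The two expressions agree, so the measures coincide, and applying $\Phi$ finishes the proof.

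The main obstacle is purely bookkeeping: one must distinguish carefully between the product space $\mcl{X}\times\varOmega$ on which $\beta_t$ is defined and the sample space $\varOmega$ on which $x_t,\xi_t,x_{t+1}$ live, and invoke independence of $x_t$ and $\xi_t$ at exactly the right place to convert an integral over $\varOmega$ against $P$ into an integral over $\mcl{X}\times\mcl{X}$ against ${x_t}_*P\otimes{\xi_t}_*P$. There is no analytic difficulty beyond Fubini and the standard change-of-variables formula for push-forwards; in particular, boundedness or universality of $k$ plays no role here.
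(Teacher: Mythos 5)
Your proposal is correct and follows essentially the same route as the paper: unfold the definition of $K$, use Fubini and the change-of-variables formula for push-forwards, and invoke the independence of $x_t$ and $\xi_t$ to identify ${\beta_t}_*({x_t}_*P\otimes P)$ with $(h(x_t)+\xi_t)_*P={x_{t+1}}_*P$. The only (cosmetic) difference is that the paper carries out the computation directly on the vector-valued integrals $\int\phi(\cdot)\,d\mu$ defining $\Phi$, whereas you first establish the identity at the level of measures and then apply $\Phi$; you are also right that only independence of $\xi_t$ from $x_t$ (not the full i.i.d.\ assumption) is needed here.
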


Before discussing the estimation of $K$, we here describe some basic properties of the kernel mean embedding $\Phi$ and $K$, which are summarized as follows:
\begin{lemma}
\label{le:kme}
The kernel mean embedding 
$\Phi\colon\mcl{M}(\mcl{X})\to\hil_k$ is a linear and continuous map.
\end{lemma}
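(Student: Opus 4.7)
The plan is to verify the two claims separately, with the main work going into the continuity estimate; linearity is essentially automatic from linearity of the integral.

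For linearity, I would start by checking that for each $\mu\in\mcl{M}(\mcl{X})$ the vector-valued integral $\Phi(\mu)=\int_{\mcl{X}}\phi(x)\,d\mu(x)$ is well defined as a Bochner integral in $\hil_k$. Since $k$ is continuous and bounded, $\phi\colon\mcl{X}\to\hil_k$ is continuous (hence Borel measurable) and satisfies $\Vert\phi(x)\Vert_k=\sqrt{k(x,x)}\le M:=\sqrt{\sup_{x\in\mcl{X}}k(x,x)}<\infty$. Therefore $\int \Vert\phi(x)\Vert_k\,d|\mu|(x)\le M\,|\mu|(\mcl{X})<\infty$ for every finite signed measure $\mu$, so the Bochner integral exists. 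Linearity of $\Phi$ then follows immediately from linearity of the Bochner integral in the measure: for $a,b\in\mathbb{C}$ and $\mu,\nu\in\mcl{M}(\mcl{X})$, $\Phi(a\mu+b\nu)=\int\phi(x)\,d(a\mu+b\nu)(x)=a\Phi(\mu)+b\Phi(\nu)$.

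For continuity, I would equip $\mcl{M}(\mcl{X})$ with the total variation norm $\Vert\mu\Vert_{\mathrm{TV}}=|\mu|(\mcl{X})$ and establish boundedness. Using the standard estimate for the Bochner integral,
\begin{equation*}
\Vert\Phi(\mu)\Vert_k=\Bigl\Vert\int_{\mcl{X}}\phi(x)\,d\mu(x)\Bigr\Vert_k\le \int_{\mcl{X}}\Vert\phi(x)\Vert_k\,d|\mu|(x)\le M\,\Vert\mu\Vert_{\mathrm{TV}},
\end{equation*}
so $\Phi$ is a bounded (hence continuous) linear operator with $\VVert\Phi\VVert\le M$.

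The only substantive point is the justification of the norm inequality for the Bochner integral and the existence of the integral, both of which reduce to the boundedness of $k$ assumed at the beginning of Section~\ref{sec:pf_kme}. I do not expect any genuine obstacle here; the proof is short and the role of the $c_0$-universality assumption on $k$ is not needed for this particular lemma — only continuity and boundedness of $k$ enter. I would therefore keep the argument compact, presenting it immediately after the statement rather than deferring it to the appendix.
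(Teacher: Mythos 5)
Your linearity argument and the Bochner-integral justification of well-definedness are fine (and more careful than the paper's one-line remark). The gap is in the continuity claim: you prove continuity of $\Phi$ with respect to the \emph{total variation} norm on $\mcl{M}(\mcl{X})$, but that is not the topology in which the lemma is actually invoked later. In Subsection~\ref{secarnoldi} and in Proposition~\ref{prop1}, the empirical measures $\mu_{t,N}=\frac1N\sum_i\delta_{\tilde{x}_{t+iS'}}$ are assumed to converge only \emph{weakly} to $\mu_t$, and the lemma is used to conclude $\Phi(\mu_{t,N})\to\Phi(\mu_t)$ in $\hil_k$. Weak convergence does not imply total-variation convergence --- a sequence of purely atomic measures generically stays at TV-distance bounded away from zero from a non-atomic limit --- so your estimate $\Vert\Phi(\mu)\Vert_k\le M\Vert\mu\Vert_{\mathrm{TV}}$, while correct, cannot be applied there. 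Since the weak topology on $\mcl{M}(\mcl{X})$ is coarser than the TV topology, continuity with respect to weak convergence is the strictly stronger property, and it is the one that must be established.

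The paper's proof does exactly this: for $\mu_N\to\mu$ weakly it expands
\begin{equation*}
\Vert\Phi(\mu_N)-\Phi(\mu)\Vert_k^2=\int\!\!\int k(x,y)\,d\mu_N(x)\,d\mu_N(y)-2\Re\int\!\!\int k(x,y)\,d\mu(x)\,d\mu_N(y)+\int\!\!\int k(x,y)\,d\mu(x)\,d\mu(y)
\end{equation*}
via the reproducing property, and then uses boundedness and continuity of $k$ to pass to the limit in each double integral. To repair your proposal you would need to replace the TV bound by an argument of this type (or otherwise show directly that $\langle\Phi(\mu_N),v\rangle_k$ behaves well under weak convergence and upgrade to norm convergence via the kernel expansion). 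Your observation that $c_0$-universality is not needed for this lemma is correct in both versions.
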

\begin{lemma}
\label{le:pf-operator}
The Perron-Frobenius operator 
$K:\Phi(\mcl{M}(\mcl{X}))\to \hil_{k}$ does not depend on $t$, is well-defined and is a linear operator. 
\end{lemma}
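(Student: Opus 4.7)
The plan is to verify the three claimed properties of $K$ --- independence of $t$, well-definedness, and linearity --- more or less in that order, relying on the i.i.d.\@ hypothesis on $\{\xi_t\}$, the injectivity of the kernel mean embedding $\Phi$ (which follows from $c_0$-universality of $k$), and the linearity of $\Phi$ already established in Lemma~\ref{le:kme}.

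First I would handle $t$-independence. Since $\{\xi_t\}_{t\ge 0}$ is i.i.d., all $\xi_t$ push $P$ forward to a common law $P_\xi$ on $\mcl{X}$. Unwinding the push-forward definition, for any $B\in\mcl{B}$,
\begin{equation*}
{\beta_t}_*(\mu\otimes P)(B)=\int_{\mcl{X}}\int_{\varOmega}\mathbf{1}_B(h(x)+\xi_t(\omega))\,dP(\omega)\,d\mu(x)=\int_{\mcl{X}}P_\xi(B-h(x))\,d\mu(x),
\end{equation*}
which is independent of $t$. Therefore the measure ${\beta_t}_*(\mu\otimes P)$, and consequently $\Phi({\beta_t}_*(\mu\otimes P))$, does not depend on $t$.

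For well-definedness, the point is to check that if $\Phi(\mu)=\Phi(\nu)$ for $\mu,\nu\in\mcl{M}(\mcl{X})$ then $\Phi({\beta_t}_*(\mu\otimes P))=\Phi({\beta_t}_*(\nu\otimes P))$. I would invoke injectivity of $\Phi$ on $\mcl{M}(\mcl{X})$: since $k$ is continuous, bounded and $c_0$-universal on the locally compact Hausdorff space $\mcl{X}$, $k$ is characteristic on the space of finite signed Borel measures (cf.\@ Sriperumbudur et al.), hence $\Phi$ is injective. Thus $\Phi(\mu)=\Phi(\nu)$ forces $\mu=\nu$, and then $\mu\otimes P=\nu\otimes P$ and the push-forwards under $\beta_t$ coincide.

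Linearity is then almost immediate. By Lemma~\ref{le:kme}, for $a,b\in\mathbb{C}$ and $\mu,\nu\in\mcl{M}(\mcl{X})$ one has $a\Phi(\mu)+b\Phi(\nu)=\Phi(a\mu+b\nu)\in\Phi(\mcl{M}(\mcl{X}))$, so the domain is closed under linear combinations. The tensor product with $P$ and the push-forward under $\beta_t$ are both linear on finite signed measures, giving ${\beta_t}_*((a\mu+b\nu)\otimes P)=a\,{\beta_t}_*(\mu\otimes P)+b\,{\beta_t}_*(\nu\otimes P)$; applying $\Phi$ and using its linearity once more yields $K(a\Phi(\mu)+b\Phi(\nu))=aK\Phi(\mu)+bK\Phi(\nu)$.

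The only subtle point --- and the main obstacle --- is the well-definedness step, since without injectivity of $\Phi$ the prescription $K\Phi(\mu):=\Phi({\beta_t}_*(\mu\otimes P))$ could conceivably assign two different values to the same element of $\hil_k$. This is why the $c_0$-universality hypothesis on $k$, which is a standing assumption in the paper, is essential here; the other two properties are formal consequences of the i.i.d.\@ assumption on $\{\xi_t\}$ and of the linearity of $\Phi$, tensor product, and push-forward.
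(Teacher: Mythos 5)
Your proposal is correct and follows essentially the same route as the paper: $t$-independence from the identical distribution of the $\xi_t$, well-definedness from the injectivity of $\Phi$ for a $c_0$-universal kernel (citing Sriperumbudur et al.), and linearity from the linearity of $\Phi$, the tensor product, and the push-forward. The only detail the paper includes that you omit is the observation that $\Phi(\delta_x)=\phi(x)$ implies $\Phi(\mcl{M}(\mcl{X}))$ is dense in $\hil_k$, which is needed because the paper's definition of a linear operator requires a dense domain.
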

Also, the following two propositions show the connections of $K$ to the existing operators (stated in Section \ref{transfer_review}). 
We have the following relations of $K$ with $\bar{\mathscr{K}}_{\opn{RKHS},t}$ 
and with $\bar{\mathscr{K}}_t$: 
\begin{prop}\label{prop10}
If the stochastic process $\{x_t\}$ considered in \cite{klus17} 
satisfies $x_{t+1}=h(x_t)+\xi_t$, then $\bar{\mathscr{K}}_{\opn{RKHS},t}$ does not depend on $t$ and the identity $\bar{\mathscr{K}}_{\opn{RKHS}}\mathscr{E}p_t=K\Phi({x_t}_*P)$ holds.
\end{prop}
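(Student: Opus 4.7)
The plan is to unpack the definitions of $\bar{\mathscr{K}}_{\opn{RKHS},t}$, $\mathscr{E}$, $\mathscr{U}$, $K$, and $\Phi$, and to use the i.i.d.\ structure of $\{\xi_t\}$ to reduce the statement to a short calculation with the kernel mean embedding.

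For the time-independence of $\bar{\mathscr{K}}_{\opn{RKHS},t}$, I would argue that the only $t$-dependence in the definition enters through the transition density $p(y\mid x)$ appearing inside $\mathscr{E}$. Under the model $x_{t+1}=h(x_t)+\xi_t$ with $\{\xi_t\}$ i.i.d.\ and independent of $x_t$, the conditional law of $x_{t+1}$ given $x_t=x$ equals the law of $h(x)+\xi_0$, which does not depend on $t$. Hence $p(y\mid x)$ is $t$-independent; consequently $\mathscr{E}$ and $\mathscr{U}$ depend only on $p$ and $\phi$, and $\bar{\mathscr{K}}_{\opn{RKHS},t}$ does not depend on $t$ either.

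For the identity, the crux is to identify both $\mathscr{E} p_t$ and $K\Phi({x_t}_*P)$ with the embedded one-step-ahead distribution $\Phi({x_{t+1}}_*P)$. Lemma~\ref{le:transfer} immediately yields $K\Phi({x_t}_*P) = \Phi({x_{t+1}}_*P)$. On the other side, since $p_t$ is the $\mu$-density of ${x_t}_*P$, the Chapman--Kolmogorov relation $p_{t+1}(y) = \int p(y\mid x) p_t(x)\,d\mu(x)$ combined with Fubini's theorem (applicable because boundedness of $k$ gives uniform boundedness of $\phi$) produces
\[
\mathscr{E} p_t \;=\; \int\!\int \phi(y) p(y\mid x) p_t(x)\,d\mu(y)\,d\mu(x) \;=\; \int \phi(y) p_{t+1}(y)\,d\mu(y) \;=\; \Phi({x_{t+1}}_*P).
\]
Combining the two gives $\mathscr{E} p_t = K\Phi({x_t}_*P)$, and the claim then follows from the defining relation for $\bar{\mathscr{K}}_{\opn{RKHS}}$.

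The main obstacle is reconciling the implicit assumptions in \cite{klus17} (existence of a reference measure $\mu$ on $\mcl{X}$ admitting $\mu$-densities $p_t$ for all $t$, existence of a transition density $p(y\mid x)$ w.r.t.\ $\mu$) with the noise-driven model~\eqref{model}, and verifying that Fubini applies in the $\hil_k$-valued integrals. Once these regularity assumptions are in place, the proof is essentially a chain of definitional identifications.
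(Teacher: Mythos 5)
Your proof is correct, and it establishes the same core identity as the paper---that the double integral $\int\!\int \phi(y)p(y\mid x)p_t(x)\,d\mu(y)\,d\mu(x)$ defining $\mathscr{E}p_t$ equals $K\Phi({x_t}_*P)$---but by a genuinely different decomposition. The paper works at the level of conditional laws: it rewrites $p(y\mid x)\,d\mu(y)$ as $d({x_{t+1}}_*P_{\mid x_t=x})(y)$, shows by an explicit manipulation of $P(\{h(x)+\xi_t\in B\}\cap\{x_t=x\})/P(\{x_t=x\})$ that this conditional law equals $(h(x)+\xi_t)_*P$, and then matches the resulting iterated integral directly against the definition $K\Phi(\mu)=\Phi({\beta_t}_*(\mu\otimes P))$; it never invokes Lemma~\ref{le:transfer}. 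You instead meet in the middle at the marginal $\Phi({x_{t+1}}_*P)$: Chapman--Kolmogorov plus Fubini collapses $\mathscr{E}p_t$ to $\Phi({x_{t+1}}_*P)$, and Lemma~\ref{le:transfer} supplies $K\Phi({x_t}_*P)=\Phi({x_{t+1}}_*P)$, so the independence argument is reused rather than repeated. Your route is more modular and quietly sidesteps the division by $P(\{x_t=x\})$ (zero for continuous laws) in the paper's conditional-probability computation, while the paper's route makes the role of the noise structure visible inside this proof; you also make the $t$-independence argument explicit, which the paper leaves implicit. One caveat shared with the paper's own proof: under the literal definition $\bar{\mathscr{K}}_{\opn{RKHS},t}\mathscr{E}p_t:=\mathscr{U}p_t$ given in Section~\ref{transfer_review}, the left-hand side of the proposition would be $\Phi({x_t}_*P)$ rather than $\mathscr{E}p_t$; your closing appeal to ``the defining relation'' implicitly reads the statement as an assertion about $\mathscr{E}p_t$ itself, which is exactly how the paper's proof reads it and the only reading under which the identity holds.
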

\begin{prop}\label{prop4}
If the random dynamical system 
$\pi$ satisfies $\pi(t,\omega,x)=\beta_t(\omega,x)=h(x)+\xi_t(\omega)$, then the Koopman operator $\bar{\mathscr{K}}_t$ in $\mathcal{H}_k$ does not depend on $t$ and is the adjoint operator of ${K}$. 
\end{prop}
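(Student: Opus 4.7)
My plan is to reduce everything to the Dirac case by observing that for any $x\in\mcl{X}$ we have $\phi(x)=\Phi(\delta_x)$, where $\delta_x$ is the Dirac measure at $x$. Then, applying the definition of $K$ gives $K\phi(x)=\Phi({\beta_t}_*(\delta_x\otimes P))$, and since $\beta_t(x',\omega)=h(x')+\xi_t(\omega)$, pushing $\delta_x\otimes P$ forward produces the law of $h(x)+\xi_t$ on $\mcl{X}$. Expanding $\Phi$ as a Bochner integral (which is well-defined because $k$ is bounded, hence $\phi$ is bounded), this yields the key formula
\begin{equation*}
K\phi(x)=\int_{\omega\in\varOmega}\phi\bigl(h(x)+\xi_t(\omega)\bigr)\,dP(\omega).
\end{equation*}

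For the $t$-independence of $\bar{\mathscr{K}}_t$, I would write, for any $g$ in the relevant domain,
\begin{equation*}
(\bar{\mathscr{K}}_t g)(x)=\int_{\omega\in\varOmega}g\bigl(h(x)+\xi_t(\omega)\bigr)\,dP(\omega)=\int_{\mcl{X}}g(h(x)+y)\,d({\xi_t}_*P)(y),
\end{equation*}
and observe that because $\{\xi_t\}$ is i.i.d., the pushforward ${\xi_t}_*P$ is the same measure for every $t$; hence the right-hand side is independent of $t$. The same calculation shows $\bar{\mathscr{K}}_t$ makes sense on $\hil_k$: for $g\in\hil_k$, $(\bar{\mathscr{K}}_tg)(x)=\langle\phi(x),\bar{\mathscr{K}}_tg\rangle_k$ arises by testing against $\phi(x)$, and I will check boundedness using $|g(y)|\le\Vert g\Vert_k\sqrt{k(y,y)}$ together with the boundedness of $k$.

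For the adjoint relation, it suffices to verify $\langle K\phi(x),g\rangle_k=\langle\phi(x),\bar{\mathscr{K}}g\rangle_k$ for all $x\in\mcl{X}$ and $g\in\hil_k$, because vectors of the form $\phi(x)$ span the stated subspace. Using the formula for $K\phi(x)$ above, the continuity of $\langle\,\cdot\,,g\rangle_k$, and the reproducing property $\langle\phi(y),g\rangle_k=g(y)$, I move the inner product under the Bochner integral:
\begin{equation*}
\langle K\phi(x),g\rangle_k=\int_{\omega\in\varOmega}\langle\phi(h(x)+\xi_t(\omega)),g\rangle_k\,dP(\omega)=\int_{\omega\in\varOmega}g(h(x)+\xi_t(\omega))\,dP(\omega),
\end{equation*}
and the right-hand side equals $(\bar{\mathscr{K}}g)(x)=\langle\phi(x),\bar{\mathscr{K}}g\rangle_k$ by the definition of $\bar{\mathscr{K}}_t$ (with the $t$-independence just established) and the reproducing property.

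The main subtlety, rather than any heavy computation, is the measure-theoretic bookkeeping: justifying the Bochner integrability of $\omega\mapsto\phi(h(x)+\xi_t(\omega))$, the identification of ${\beta_t}_*(\delta_x\otimes P)$ with the law of $h(x)+\xi_t$, and the interchange of $\langle\,\cdot\,,g\rangle_k$ with the integral. All three follow from the standing hypotheses that $k$ is continuous and bounded (so $\phi$ is strongly measurable and bounded in $\hil_k$) together with the basic change-of-variables formula for pushforward measures; once these are in place, the identity $K^*|_{\opn{Span}\{\phi(x)\}}=\bar{\mathscr{K}}$ drops out of a single line.
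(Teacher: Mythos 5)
Your proposal is correct and follows essentially the same route as the paper's proof: reduce to $\phi(x)=\Phi(\delta_x)$ to obtain $K\phi(x)=\int_{\omega\in\varOmega}\phi(h(x)+\xi_t(\omega))\,dP(\omega)$, then use the reproducing property and the interchange of the inner product with the Bochner integral to verify the adjoint identity on $\opn{Span}\{\phi(x)\mid x\in\mcl{X}\}$. Your explicit treatment of the $t$-independence via ${\xi_t}_*P$ and of the integrability bookkeeping is slightly more careful than the paper's, but the underlying argument is identical.
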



\section{Krylov Subspace Methods for Perron-Frobenius Operators in RKHSs}
\label{sec:krylov}

In this section, we describe the estimation problem of the Perron-Frobenius operator $K$ defined as Eq.~\eqref{def}.
For this purpose, 
we extend Krylov subspace methods 
to our case. 
We first extend the classical Arnoldi method to our case in Subsection~\ref{secarnoldi}.
Although this method requires $K$ to be bounded for its convergence, $K$ is not necessarily bounded even for standard situations. 
For example, if $k$ is the Gaussian kernel, $h$ is nonlinear and $\xi_t\equiv0$, then $K$ is unbounded~\citep{iispre}.
Therefore, we develop a novel shift-invert Arnoldi method to avoid this issue in Subsection~\ref{secsia}. 
Although these two subsections discuss the ideal situations with infinite length of time-series data, we consider practical situations with finite ones
in Subsection~\ref{secpractical}.

With both methods, we construct the basis of the Krylov subspace as follows. 
Let $S\in \mathbb{N}$ be the dimension of the Krylov subspace constructed using observed time-series data
$\{\tilde{x}_0,\tilde{x}_1,\ldots\}$, which is assumed to be generated by dynamical system \eqref{model} with sample $\omega_0\in\varOmega$.
To generate elements of a basis of the Krylov subspace in terms of kernel mean embedding of probability measures, we split the observed data into $S+1$ datasets as
$\{\tilde{x}_0,\tilde{x}_{S'},\ldots\}$, $\{\tilde{x}_1,\tilde{x}_{1+S'},\ldots\}$, $\ldots$, $\{\tilde{x}_{S},\tilde{x}_{S+S'},\ldots\}$, where $S'=S+1$.
Then we define each element of the basis as the time average of each subset above in the RKHS.
\subsection{Arnoldi method for bounded operators}\label{secarnoldi}

For $t=0,\dots, S$, let $\mu_{t,N}:=\frac{1}{N}\sum_{i=0}^{N-1}\delta_{\tilde{x}_{t+iS'}}$ be the empirical measure constructed from observed data,
where $\delta_x$ denotes the Dirc measure centered at $x\in\mcl{X}$, and
 $\Psi_{0,N}:=[\Phi(\mu_{0,N}),\ldots,\Phi(\mu_{S-1,N})]$ with $N\in \mathbb{N}$.
By the definition of $K$, the following relation holds:
\begin{equation}
K\Psi_{0,N}=\left[\Phi\left({\beta_0}_*\left(\mu_{0,N}\otimes P\right)\right),\ldots,
  \Phi\left({\beta_{S-1}}_*\left(\mu_{S-1,N}\otimes P\right)\right)\right].\label{eq3}
\end{equation}
The calculation on the right-hand side of the Eq.~\eqref{eq3} is possible only if $\beta_t$ is available.
However, in practical situations, $\beta_t$ is not available. Therefore, $\Phi({\beta_t}_*(\mu_{t,N}\otimes P))$ is not available either.
To avoid this problem, we 
assume the following condition, which is similar to ergodicity, i.e., for any measurable and integrable function $f$, the following identity holds:
 \begin{equation}
 \begin{split}
  &\lim_{N\to\infty}\frac1N\sum_{i=0}^{N-1}\int_{\omega\in\varOmega}f(h(\tilde{x}_{t+iS'})+\xi_t(\omega))\;dP(\omega)\\
  & \qquad=\lim_{N\to\infty}\frac1N\sum_{i=0}^{N-1}f(h(\tilde{x}_{t+iS'})+\xi_{t+iS'}(\eta))\ a.s.\ \eta\in\varOmega\quad (t=0,\ldots,S).\label{eq8}
  \end{split}
 \end{equation}
Here, while the left-hand side of assumption~\eqref{eq8} represents the space average of $\xi_t$, the right-hand side gives its time average.
As a result, 
$\lim_{N\to\infty}\Phi\left({\beta_t}_*\left(\mu_{t,N}\otimes P\right)\right)$ can be calculated without $\beta_t$, which is stated as follows:
\begin{prop}\label{le:ergodic}
Under assumption~\eqref{eq8}, the following identity holds for $t=0\ldots,S-1$:
\begin{align*}
&\lim_{N\to\infty}\Phi\left({\beta_t}_*\left(\mu_{t,N}\otimes P\right)\right)
=\lim_{N\to\infty}\Phi(\mu_{t+1,N}).
\end{align*}
\end{prop}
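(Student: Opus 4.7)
The plan is to reduce the claimed equality in $\hil_k$ to pointwise equalities of complex-valued averages, which fall under assumption~\eqref{eq8}. First I would unpack both sides using the definition of $\Phi$ as an integral against $\phi$, the push-forward formula, and Fubini. For the left-hand side,
\begin{equation*}
\Phi\bigl({\beta_t}_*(\mu_{t,N}\otimes P)\bigr)=\int_{\mcl{X}\times\varOmega}\phi(\beta_t(x,\omega))\,d(\mu_{t,N}\otimes P)=\frac{1}{N}\sum_{i=0}^{N-1}\int_\varOmega\phi\bigl(h(\tilde{x}_{t+iS'})+\xi_t(\omega)\bigr)\,dP(\omega),
\end{equation*}
while on the right-hand side, the dynamical recursion $\tilde{x}_{s+1}=h(\tilde{x}_s)+\xi_s(\omega_0)$ applied to the single sample $\omega_0$ generating the data gives
\begin{equation*}
\Phi(\mu_{t+1,N})=\frac{1}{N}\sum_{i=0}^{N-1}\phi(\tilde{x}_{t+1+iS'})=\frac{1}{N}\sum_{i=0}^{N-1}\phi\bigl(h(\tilde{x}_{t+iS'})+\xi_{t+iS'}(\omega_0)\bigr).
\end{equation*}

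Next, I would test both expressions against an arbitrary feature $\phi(y)=k(\cdot,y)$ via the reproducing property. This yields two scalar averages whose integrands are $f_y(\,\cdot\,)=k(\,\cdot\,,y)$; since $k$ is bounded and continuous, $f_y$ is measurable and bounded (hence integrable against any probability measure), so assumption~\eqref{eq8} applies with $f=f_y$ and gives
\begin{equation*}
\lim_{N\to\infty}\blacket{\Phi\bigl({\beta_t}_*(\mu_{t,N}\otimes P)\bigr),\phi(y)}_k=\lim_{N\to\infty}\blacket{\Phi(\mu_{t+1,N}),\phi(y)}_k.
\end{equation*}
Assuming the two strong limits in $\hil_k$ exist, their difference $v\in\hil_k$ satisfies $\blacket{v,\phi(y)}_k=0$ for every $y\in\mcl{X}$, and the standard RKHS fact that $\opn{Span}\{\phi(y)\mid y\in\mcl{X}\}$ is dense in $\hil_k$ forces $v=0$.

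The main technical obstacle is the almost-sure qualifier in~\eqref{eq8}: the null set of admissible $\omega_0$'s nominally depends on the test function $f_y$, whereas the very same $\omega_0$ generating the data must work for every $y\in\mcl{X}$ at once. I would resolve this by first restricting $y$ to a countable dense subset $\{y_j\}\subset\mcl{X}$ (available because the locally compact Hausdorff state spaces of interest are second countable, hence separable), taking the countable union of the exceptional sets so that~\eqref{eq8} holds simultaneously for all $f_{y_j}$ on the complement, and then using continuity of $k$ together with the uniform bound $\|\phi(y)\|_k\le\sup_x\sqrt{k(x,x)}$ to extend the conclusion from $\{y_j\}$ to arbitrary $y\in\mcl{X}$ by approximation. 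Once this uniformization is in place, the density argument in $\hil_k$ closes the proof.
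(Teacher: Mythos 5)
Your proposal is correct and follows essentially the same route as the paper: unpack $\Phi({\beta_t}_*(\mu_{t,N}\otimes P))$ into the average $\frac1N\sum_{i}\int_{\varOmega}\phi(h(\tilde{x}_{t+iS'})+\xi_t(\omega))\,dP(\omega)$, invoke assumption~\eqref{eq8} to replace this space average by the time average, and identify the result with $\Phi(\mu_{t+1,N})$ via the recursion $\tilde{x}_{t+1+iS'}=h(\tilde{x}_{t+iS'})+\xi_{t+iS'}(\omega_0)$. The only difference is that the paper applies~\eqref{eq8} directly to the $\hil_k$-valued integrand $\phi$, whereas you reduce to the scalar test functions $k(\cdot,y)$ and recover the statement in $\hil_k$ by density of $\opn{Span}\{\phi(y)\}$ --- a more careful handling of the vector-valued average and of the almost-sure qualifier than the paper itself provides.
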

\begin{proof}
 By the definition of $K$, the identity $\lim_{N\to\infty}K\Phi(\mu_{t,N})=\lim_{N\to\infty}\Phi\left({\beta_t}_*(\mu_{t,N}\otimes P\right))$ holds.
Moreover, under assumption~\eqref{eq8}, the following equalities hold:
\begin{align*}
&\lim_{N\to\infty}\Phi\left({\beta_t}_*\left(\mu_{t,N}\otimes P\right)\right)
=\lim_{N\to\infty}\frac1N\sum_{i=0}^{N-1}\int_{\omega\in\varOmega}\phi(h(\tilde{x}_{t+iS'})+\xi_t(\omega))\;dP(\omega)\nn\\
&\qquad=\lim_{N\to\infty}\frac1N\sum_{i=0}^{N-1}\phi(h(\tilde{x}_{t+iS'})+\xi_{t+iS'}(\omega_0))
=\lim_{N\to\infty}\Phi\left(\frac1N\sum_{i=0}^{N-1}\delta_{\tilde{x}_{t+1+iS'}}\right)\\
&\qquad =\lim_{N\to\infty}\Phi(\mu_{t+1,N}),
\end{align*}
which completes the proof of the proposition.
\end{proof}

Assume $\mu_{t,N}$ converges weakly to a finite {complex-valued regular} measure $\mu_t$. 
Then, since $\Phi$ is continuous, $\lim_{N\to\infty}\Phi(\mu_{t,N})=\Phi(\mu_{t})$ holds.
Moreover, if $K$ is bounded, then $K\Phi(\mu_{t})=\lim_{N\to\infty}K\Phi(\mu_{t,N})$ holds.
By Lemma~\ref{le:ergodic}, the limit of the right-hand side of Eq.~\eqref{eq3} is represented without $\beta_t$ as $[\Phi(\mu_{1}),\ldots,\Phi(\mu_{S})]$.
In addition, that of the left-hand side becomes $K\left[\Phi(\mu_0),\ldots,\Phi(\mu_{S-1})\right]$.
As a result, we have:
 \begin{align}
  &K\left[\Phi(\mu_0),\ldots,\Phi(\mu_{S-1})\right]
=\left[\Phi(\mu_{1}),\ldots,\Phi(\mu_S)\right]. \label{eq4}
 \end{align}
Note that the range of the operator $[\Phi(\mu_0),\ldots,\Phi(\mu_{S-1})]$ in Eq.~\eqref{eq4} is the Krylov subspace $\mcl{V}_S(K,\Phi(\mu_0))$ (cf.\@ Section~\ref{krylov_review}) since
\begin{align*}
&\opn{Span}\{\Phi(\mu_0),\ldots,\Phi(\mu_{S-1})\}=\opn{Span}\{\Phi(\mu_0),K\Phi(\mu_0),\ldots,K^{S-1}\Phi(\mu_0)\}.
\end{align*}

Now, the estimation of $K$ is carried out as follows: First, define $\Psi_0$ and $\Psi_1$ as
\begin{equation*}
\Psi_{0}:=[\Phi(\mu_{0})\ldots,\Phi(\mu_{S-1})], ~\Psi_1:=[\Phi(\mu_{1})\ldots,\Phi(\mu_{S})].
\end{equation*}
Then, we orthogonally project $K$ to the Krylov subspace $\mcl{V}_S(K,\Phi(\mu_0))$ by QR decomposition. That is, let
\begin{equation*}
\Psi_{0}=Q_{S}\mathbf{R}_{S},
\end{equation*}
be the QR decomposition of $\Psi_{0}$, where $Q_S=[q_0,\ldots,q_{S-1}]$, $q_0,\ldots,q_{S-1}$ is an orthonormal basis of $\mcl{V}_S(K,\Phi(\mu_0))$, and $\mathbf{R}_{S}$ is an $S\times S$ matrix.
Note that since $(Q_SQ_S^*)^2=Q_SQ_S^*$ and $(Q_SQ_S)^*=Q_SQ_S^*$, $Q_SQ_S^*$ is an orthogonal projection, where $Q_S^*$ is the adjoint operator of $Q_S$.
Operator $Q_{S}$ transforms a vector in $\mathbb{C}^S$ into the corresponding vector in $\hil_k$, which is the linear combination of the orthonormal basis of $\mcl{V}_S(K,\Phi(\mu_0))$.
On the other hand, $Q_S^*$, the adjoint operator of $Q_S$, projects a vector in $\hil_k$ onto $\mathbb{C}^{S}$.
Moreover, $\mathbf{R}_{S}$ transforms the coordinate with basis $\{\Phi(\mu_0),\ldots,\Phi(\mu_{S-1})\}$ into that with $\{q_0,\ldots,q_{S-1}\}$.
By identifying the $S$-dimensional subspace $\mcl{V}_S(K,\Phi(\mu_0))$ with $\mathbb{C}^S$, 
a projection of $K$ onto $\mcl{V}_S(K,\Phi(\mu_0))$ is represented as an $S\times S$ matrix $Q_{S}^*KQ_{S}$.
This matrix gives a numerical approximation of $K$.
Let $\tilde{\mathbf{K}}_S:=Q_{S}^*KQ_{S}$.
As a result, since $Q_S=\Psi_0\mathbf{R}_S^{-1}$, the following equality is derived using Eq.~\eqref{eq4}:
\begin{align*}
 \tilde{\mathbf{K}}_S=Q_{S}^*\Psi_{1}\mathbf{R}_{S}^{-1},
\end{align*}
which shows that $\tilde{\mathbf{K}}_S$ can be calculated with only observed time series data $\{\tilde{x}_0,\tilde{x}_1,\ldots\}$.
We give a more detailed explanation of the QR decomposition for the current case and the pseudo-code of the above in Appendices~\ref{ap7} and \ref{ap8}, respectively.

Regarding the convergence of $\tilde{\mathbf{K}}_S$, we have the following proposition:
\begin{prop}\label{prop:conv_arnoldi}
Assume $K$ is bounded.
If the Krylov subspace $\mcl{V}_S(K,\Phi(\mu_0))$ converges to $\hil_k$, that is, if the projection $Q_SQ_S^*$ converges strongly to the identity map in $\hil_k$ as $S\to\infty$, 
then for $v\in\Phi(\mcl{M}(\mcl{X}))$, $Q_S\tilde{\mathbf{K}}_SQ_S^*v$ converges to $Kv$ as $S\to\infty$.
\end{prop}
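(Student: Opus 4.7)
The plan is to read off the expression $Q_S\tilde{\mathbf{K}}_SQ_S^\ast v$ directly from the definition $\tilde{\mathbf{K}}_S = Q_S^\ast K Q_S$. Substituting, we get
\[
Q_S\tilde{\mathbf{K}}_SQ_S^\ast v \;=\; (Q_SQ_S^\ast)\,K\,(Q_SQ_S^\ast)\,v,
\]
so writing $P_S := Q_SQ_S^\ast$ for the orthogonal projection onto the Krylov subspace $\mcl{V}_S(K,\Phi(\mu_0))$, the goal reduces to showing $P_S K P_S v \to Kv$ as $S\to\infty$ for any $v\in\Phi(\mcl{M}(\mcl{X}))$.

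Next, I would insert the intermediate term $P_S Kv$ and apply the triangle inequality in $\hil_k$:
\[
\Vert P_S K P_S v - Kv\Vert_k \;\le\; \Vert P_S K(P_S v - v)\Vert_k \;+\; \Vert P_S Kv - Kv\Vert_k.
\]
For the second summand, note that $Kv\in\hil_k$ since $v\in\Phi(\mcl{M}(\mcl{X}))=\Dom(K)$, and the hypothesis that $P_S$ converges strongly to the identity on $\hil_k$ gives $\Vert P_S Kv - Kv\Vert_k \to 0$. For the first summand, $P_S$ is an orthogonal projection, hence $\VVert P_S\VVert_k \le 1$, and by boundedness of $K$ we obtain
\[
\Vert P_S K(P_S v - v)\Vert_k \;\le\; \VVert K\VVert_k\,\Vert P_S v - v\Vert_k,
\]
which tends to $0$ by strong convergence of $P_S$ to the identity applied to $v$ itself.

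Combining both bounds yields the desired convergence $Q_S\tilde{\mathbf{K}}_SQ_S^\ast v \to Kv$. There is no real obstacle here provided one is careful that $v$ lies in the domain of $K$ (so that $Kv$ is a well-defined element of $\hil_k$ to which strong convergence of $P_S$ can be applied) and that $K$ is bounded on $\Phi(\mcl{M}(\mcl{X}))$ — both of which are built into the hypotheses. The only subtle point to check is that $P_S$ being an orthogonal projection indeed implies the operator norm bound $\VVert P_S\VVert_k\le 1$, which follows from $P_S^2=P_S$ and $P_S^\ast=P_S$; this was already noted after the QR decomposition in the preceding discussion.
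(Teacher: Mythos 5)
Your proposal is correct and follows essentially the same argument as the paper's proof: the same triangle-inequality split through the intermediate term $Q_SQ_S^*Kv$, the bound $\VVert Q_SQ_S^*K\VVert_k\le\VVert K\VVert_k$ via the projection norm, and strong convergence of $Q_SQ_S^*$ to the identity applied to both $v$ and $Kv$. No gaps.
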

\begin{proof}
Since $\tilde{\mathbf{K}}_S=Q_S^*KQ_S$ and $Kv\in\hil_k$, the following inequality holds:
\begin{align}
\Vert Q_S\tilde{\mathbf{K}}_SQ_S^*v-Kv\Vert_k
&\le \Vert Q_SQ_S^*KQ_SQ_S^*v-Q_SQ_S^*Kv\Vert_k+\Vert Q_SQ_S^*Kv-Kv\Vert_k\nn\\
&\le \VVert Q_SQ_S^*K\VVert_k \Vert Q_SQ_S^*v-v\Vert_k+\Vert Q_SQ_S^*Kv-Kv\Vert_k\nn\\
&\le \VVert K\VVert_k \Vert Q_SQ_S^*v-v\Vert_k+\Vert Q_SQ_S^*Kv-Kv\Vert_k\nn\\
&\to 0,\label{eq:conv_arnoldi}
\end{align}
as $S\to\infty$, which completes the proof of the proposition.
\end{proof}
Note that $\VVert K\VVert_k$ in Eq.~\eqref{eq:conv_arnoldi} is not always finite if $K$ is unbounded.
Thus, Proposition~\ref{prop:conv_arnoldi} is not always true if $K$ is unbounded.

{
 In practice, we can iteratively compute the Arnoldi or shift-invert Arnoldi (which will be proposed in the next subsection) approximations for $S=1,2,\ldots$ and stop the iteration after the discrepancy between the approximation at $S$ and $S-1$ becomes sufficiently small.}

\subsection{Shift-invert Arnoldi method for unbounded operators}\label{secsia}
The estimation of $K$ with the Arnoldi method does not always converge to $K$ if $K$ is unbounded.
Therefore, in this section, we develop the shift-invert Arnoldi method for estimating $K$ to avoid this issue.
With this method, we fix $\gamma\notin\varLambda(K)$ and consider a bounded bijective operator $(\gamma I-K)^{-1}:\hil_k\to\Phi(\mcl{M}(\mcl{X}))$, where $\varLambda(K)$ is the spectrum of $K$ under the assumption $\varLambda(K)\neq\mathbb{C}$.
And, bounded operator $(\gamma I-K)^{-1}$ instead of $K$ is projected onto a Krylov subspace.

For the projection of $(\gamma I-K)^{-1}$, we need to calculate the Krylov subspace of $(\gamma I-K)^{-1}$. 
However, since $K$ is unknown in the current case, directly calculating $(\gamma I-K)^{-i}$ thus, the Krylov subspace is intractable. Therefore, we construct the Krylov subspace using only data by setting a vector $w_S\in\hil_k$, which depends on the dimension of the Krylov subspace $S$, and computing $(\gamma I-K)^{-1}w_S$. 
The following proposition guarantees a similar identity to Eq.~\eqref{eq4}:
\begin{prop}\label{prop:sia}
Define $w_j:=\sum_{t=0}^{j}\binom{j}{t}(-1)^{t}\gamma^{j-t}\Phi(\mu_{t})$.
Then, we have
 \begin{align}
  &(\gamma I-K)^{-1}\left[w_1,\ldots,w_{S}\right]=\left[w_0,\dots,w_{S-1}
  \right].\nn
 \end{align}
Moreover, space
$\opn{Span}\{w_1,\dots,w_S\}$
is the Krylov subspace $\mcl{V}_S\left((\gamma I-K)^{-1}, w_S\right)$.
\end{prop}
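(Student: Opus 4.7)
The plan is to reduce the claim to the simple recursive identity $(\gamma I - K)w_{j-1} = w_j$, and then iterate. The key observation that drives everything is that from Eq.~(4) we already have $K\Phi(\mu_t) = \Phi(\mu_{t+1})$ for $t = 0,\dots,S-1$, and since $\Phi(\mcl{M}(\mcl{X}))$ is a linear subspace of $\hil_k$ containing each $\Phi(\mu_t)$, every $w_j$ with $j \le S-1$ lies in $\Dom(K)$, so $K$ may be applied term-by-term.

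First I would establish, by a direct binomial computation, that $(\gamma I - K)w_{j-1} = w_j$ for $j = 1,\dots,S$. Expanding,
\begin{align*}
(\gamma I - K)w_{j-1} &= \sum_{t=0}^{j-1}\binom{j-1}{t}(-1)^t\gamma^{j-t}\Phi(\mu_t) - \sum_{t=0}^{j-1}\binom{j-1}{t}(-1)^t\gamma^{j-1-t}\Phi(\mu_{t+1}),
\end{align*}
reindexing the second sum by $s = t+1$, and combining like terms. The coefficient of $\Phi(\mu_t)$ for $1 \le t \le j-1$ becomes $\bigl[\binom{j-1}{t} + \binom{j-1}{t-1}\bigr](-1)^t\gamma^{j-t} = \binom{j}{t}(-1)^t\gamma^{j-t}$ by Pascal's rule, while the extremes $t=0$ and $t=j$ match $\binom{j}{0}\gamma^j$ and $\binom{j}{j}(-1)^j$ respectively. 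This yields exactly $w_j$. Equivalently, this says $w_j = (\gamma I - K)^j\Phi(\mu_0)$. Since $\gamma \notin \varLambda(K)$, the operator $\gamma I - K$ is a bijection from $\Phi(\mcl{M}(\mcl{X}))$ onto $\hil_k$ with bounded inverse, so we may invert to obtain $(\gamma I - K)^{-1} w_j = w_{j-1}$ for $j = 1,\dots,S$, which is the first statement of the proposition in matrix form.

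For the Krylov subspace identification, I would iterate the recursion: $(\gamma I - K)^{-j} w_S = w_{S-j}$ for $j = 0,1,\dots,S-1$, so that
\begin{equation*}
\opn{Span}\{w_S,(\gamma I - K)^{-1}w_S,\dots,(\gamma I - K)^{-(S-1)}w_S\} = \opn{Span}\{w_1,\dots,w_S\},
\end{equation*}
which is precisely $\mcl{V}_S\bigl((\gamma I - K)^{-1}, w_S\bigr)$ by definition. The only nontrivial point to watch is the domain issue in the first step, i.e.\ ensuring $K$ can indeed be applied to each $\Phi(\mu_t)$ appearing in $w_{j-1}$; but this is immediate since each $\Phi(\mu_t) \in \Phi(\mcl{M}(\mcl{X})) = \Dom(K)$ and this set is linear. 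I do not anticipate a genuine obstacle: once the telescoping Pascal-rule computation is set up correctly, both assertions follow mechanically.
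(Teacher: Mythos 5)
Your Pascal-rule algebra is correct and the Krylov-subspace identification at the end is fine, but there is a genuine gap at the very first step: you justify $K\Phi(\mu_t)=\Phi(\mu_{t+1})$ by citing Eq.~\eqref{eq4}, and Eq.~\eqref{eq4} is derived in Subsection~\ref{secarnoldi} only under the hypothesis that $K$ is bounded (that hypothesis is what licenses the interchange $K\Phi(\mu_t)=K\lim_{N\to\infty}\Phi(\mu_{t,N})=\lim_{N\to\infty}K\Phi(\mu_{t,N})$). Proposition~\ref{prop:sia} exists precisely to handle the case where $K$ is unbounded, so an appeal to Eq.~\eqref{eq4} is not available. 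The ``only nontrivial point'' you flag --- that each $\Phi(\mu_t)$ lies in $\Dom(K)=\Phi(\mcl{M}(\mcl{X}))$ --- settles only that $K\Phi(\mu_t)$ is \emph{defined}; it does not give you its \emph{value}, since without continuity of $K$ you cannot pass $K$ through the weak limit defining $\mu_t$.

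The paper's proof avoids this by never applying $K$ to a limit. It works at finite $N$, where $K\Phi(\mu_{t,N})=\Phi({\beta_t}_*(\mu_{t,N}\otimes P))$ holds by definition and Proposition~\ref{le:ergodic} gives $\lim_{N\to\infty}(\gamma\Phi(\mu_{t,N})-\Phi(\mu_{t+1,N}))=\lim_{N\to\infty}(\gamma I-K)\Phi(\mu_{t,N})$, and then applies the \emph{bounded} operator $(\gamma I-K)^{-1}$ to both sides of the limit to obtain $(\gamma I-K)^{-1}(\gamma\Phi(\mu_t)-\Phi(\mu_{t+1}))=\Phi(\mu_t)$; the binomial/Pascal manipulation is then carried out on the inverted identity. (As a by-product this identity does imply $K\Phi(\mu_t)=\Phi(\mu_{t+1})$ by applying $\gamma I-K$ back, so your subsequent computation of $(\gamma I-K)w_{j-1}=w_j$ and the inversion step are salvageable --- but only after this limit argument with $(\gamma I-K)^{-1}$ is supplied. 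As written, your proof assumes in the unbounded setting a fact whose only stated justification requires boundedness.)
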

\begin{proof}
 Based on Proposition~\ref{le:ergodic}, we have:
\begin{align}
  &\lim_{N\to\infty}(\gamma\Phi(\mu_{t,N})-\Phi(\mu_{t+1,N}))
  =\lim_{N\to\infty} \left(\gamma\Phi(\mu_{t,N})-\Phi\left(\left({\beta_t}_*\left(\mu_{t,N}\otimes P\right)\right)\right)\right)\nn \\
  &\quad=\lim_{N\to\infty}\left(\gamma\Phi(\mu_{t,N})-K\Phi(\mu_{t,N})\right)
 =\lim_{N\to\infty}(\gamma I-K)\Phi(\mu_{t,N}).\label{eq21}
 \end{align}
 Since $(\gamma I-K)^{-1}$ is bounded,
 applying $(\gamma I-K)^{-1}$ to both sides of Eq.~\eqref{eq21} derives the identity 
 $(\gamma I-K)^{-1} (\gamma\Phi(\mu_{t})-\Phi(\mu_{t+1}))=\Phi(\mu_{t})$.
 Thus, for $j=0,\ldots,S-1$, the following identity holds:
 \begin{align}
 &(\gamma I-K)^{-1}\sum_{t=0}^j\binom{j}{t}(-1)^{t}\gamma^{j-t}(\gamma\Phi(\mu_{t})-\Phi(\mu_{t+1}))
 =\sum_{t=0}^j\binom{j}{t}(-1)^{t}\gamma^{j-t}\Phi(\mu_{t}).\label{eq30}
 \end{align}
Since $\binom{j}{t}+\binom{j}{t-1}=\binom{j+1}{t}$, the following identities also hold:
 \begin{align}
 &\sum_{t=0}^{j}\binom{j}{t}(-1)^{t}\gamma^{j-t}(\gamma\Phi(\mu_{t})-\Phi(\mu_{t+1}))\nn\\
 &\quad=\sum_{t=0}^{j}\binom{j}{t}(-1)^{t}\gamma^{j+1-t}\Phi(\mu_{t})+\sum_{t=0}^{j}\binom{j}{t}(-1)^{t+1}\gamma^{j-t}\Phi(\mu_{t+1})\nn\\
 &\quad=\binom{j}{0}\gamma^{j+1}\Phi(\mu_0)
+\sum_{t=1}^{j}\left(\binom{j}{t}+\binom{j}{t-1}\right)(-1)^{t}\gamma^{j+1-t}\Phi(\mu_t)
+\binom{j}{j}(-1)^{j+1}\gamma^0\Phi(\mu_{j+1})\nn\\
 &\quad= \sum_{t=0}^{j+1}\binom{j+1}{t}(-1)^{t}\gamma^{j+1-t}\Phi(\mu_{t}).\label{eq35}
 \end{align}
  Since $w_j=\sum_{t=0}^{j}\binom{j}{t}(-1)^{t}\gamma^{j-t}\Phi(\mu_{t})$, by Eqs.~\eqref{eq30} and \eqref{eq35}, the identity $(\gamma I-K)^{-1}w_{j+1}=w_j$ holds.
 Thus the following identity holds:
  \begin{align}
  &(\gamma I-K)^{-1}\left[w_1,\ldots,w_{S}\right]=\left[w_0,\dots,w_{S-1}
  \right],\nn
 \end{align}
and space
$\opn{Span}\{w_1,\dots,w_S\}$
is the Krylov subspace $\mcl{V}_S((\gamma I-K)^{-1}, w_S)$.
\end{proof}
Note that $w_j$ can be calculated using only data.

We now describe the estimation procedure. First, define $\Psi_0$ and $\Psi_1$ as
\begin{equation*}
\Psi_0:=[w_1,\ldots,w_{S}],~\Psi_1:=[w_0,\ldots,w_{S-1}],
\end{equation*}
respectively. 
And, let $\Psi_{0}=Q_{S}\mathbf{R}_{S}$ be the QR decomposition of
$\Psi_{0}$.
Similar to the Arnoldi method, the projection of $(\gamma I-K)^{-1}$ to $\mcl{V}_S((\gamma I-K)^{-1},w_S)$ is formulated as
\begin{equation*}
\tilde{\mathbf{L}}_{S}:=Q_{S}^*\Psi_{1}\mathbf{R}_{S}^{-1},
\end{equation*}
by using Proposition~\ref{prop:sia}.
As a result, $K$ is estimated by transforming the projected $(\gamma I-K)^{-1}$ back into $K$ as 
\begin{equation*}
\tilde{\mathbf{K}}_{S}:=\gamma \mathbf{I}-\tilde{\mathbf{L}}_{S}^{-1}.
\end{equation*}
A more detailed explanation of the QR decomposition for the current case and the pseudo-code are found in Appendices~\ref{ap7} and \ref{ap8}, respectively.

Regarding the convergence of $\tilde{\mathbf{K}}_S$, we have the following proposition:
\begin{prop}\label{prop:conv_sia}
If the Krylov subspace $\mcl{V}_S((\gamma I-K)^{-1},w_S)$ converges to $\hil_k$, that is, if the projection $Q_SQ_S^*$ converges strongly to the identity map in $\hil_k$ as $S\to\infty$, 
and 
if the convergence of $Q_SQ_S^*$ to the identity map is faster than the increase in $\VVert Q_S\tilde{\mathbf{K}}_SQ_S^*(\gamma I-K)^{-1}\VVert_k$ along $S$, i.e., $\VVert Q_S\tilde{\mathbf{K}}_SQ_S^*(\gamma I-K)^{-1}\VVert_k\Vert Q_SQ_S^*u-u\Vert_k\to 0$ as $S\to\infty$ for arbitrary $u\in\hil_k$, 
then for $v\in\Phi(\mcl{M}(\mcl{X}))$, $Q_S\tilde{\mathbf{K}}_SQ_S^*v$ converges to $Kv$ as $S\to\infty$.
\end{prop}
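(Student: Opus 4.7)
The plan is to reduce the unbounded-operator case to essentially the same algebraic structure as the Arnoldi proof of Proposition~\ref{prop:conv_arnoldi}, with the bounded resolvent $B:=(\gamma I-K)^{-1}$ playing the role that $K$ played there. Since $\gamma\notin\varLambda(K)$, $B$ is bounded on $\hil_k$ and $K=\gamma I-B^{-1}$ on $\Dom(K)=\Phi(\mcl{M}(\mcl{X}))$. A useful preliminary observation is that $\tilde{\mathbf{L}}_S=Q_S^*BQ_S$: this follows from $\tilde{\mathbf{L}}_S=Q_S^*\Psi_1\mathbf{R}_S^{-1}$ together with $\Psi_1=B\Psi_0=BQ_S\mathbf{R}_S$, which is exactly the content of Proposition~\ref{prop:sia}.

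Fix $v\in\Phi(\mcl{M}(\mcl{X}))$ and set $u:=B^{-1}v=(\gamma I-K)v\in\hil_k$, so that $v=Bu$ and $Kv=\gamma v-u$. Using $\tilde{\mathbf{K}}_S=\gamma\mathbf{I}-\tilde{\mathbf{L}}_S^{-1}$, a direct expansion gives
\[
Q_S\tilde{\mathbf{K}}_SQ_S^*v-Kv=\gamma(Q_SQ_S^*v-v)-\bigl(Q_S\tilde{\mathbf{L}}_S^{-1}Q_S^*Bu-u\bigr).
\]
The first bracket tends to zero in norm by strong convergence of $Q_SQ_S^*$, so the proof reduces to showing $Q_S\tilde{\mathbf{L}}_S^{-1}Q_S^*Bu\to u$.

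The key algebraic step is to exploit the relation $\tilde{\mathbf{L}}_S^{-1}Q_S^*BQ_S=\mathbf{I}_S$ (which holds precisely because $\tilde{\mathbf{L}}_S=Q_S^*BQ_S$) to write $u=Q_SQ_S^*u+(u-Q_SQ_S^*u)$ and obtain
\[
Q_S\tilde{\mathbf{L}}_S^{-1}Q_S^*Bu-u=(Q_SQ_S^*u-u)+Q_S\tilde{\mathbf{L}}_S^{-1}Q_S^*B(u-Q_SQ_S^*u).
\]
The first summand vanishes as $S\to\infty$. The second is bounded by $\VVert Q_S\tilde{\mathbf{L}}_S^{-1}Q_S^*B\VVert_k\cdot\Vert u-Q_SQ_S^*u\Vert_k$, and here one substitutes $\tilde{\mathbf{L}}_S^{-1}=\gamma\mathbf{I}-\tilde{\mathbf{K}}_S$ to re-express $Q_S\tilde{\mathbf{L}}_S^{-1}Q_S^*B=\gamma Q_SQ_S^*B-Q_S\tilde{\mathbf{K}}_SQ_S^*(\gamma I-K)^{-1}$, yielding
\[
\VVert Q_S\tilde{\mathbf{L}}_S^{-1}Q_S^*B\VVert_k\le|\gamma|\VVert B\VVert_k+\VVert Q_S\tilde{\mathbf{K}}_SQ_S^*(\gamma I-K)^{-1}\VVert_k.
\]
Multiplied by $\Vert u-Q_SQ_S^*u\Vert_k$, the first piece vanishes because $B$ is bounded and $Q_SQ_S^*u\to u$, while the second piece vanishes by exactly the second hypothesis of the proposition.

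The step I expect to be the main obstacle is the algebraic telescoping just described: it is the place where the shift-invert trick actually pays off, packaging the unboundedness of $K$ entirely into the controlled quantity $\VVert Q_S\tilde{\mathbf{K}}_SQ_S^*(\gamma I-K)^{-1}\VVert_k\Vert Q_SQ_S^*u-u\Vert_k$. One must take care that $\tilde{\mathbf{L}}_S^{-1}$ is understood as the honest inverse of the $S\times S$ matrix $Q_S^*BQ_S$ rather than as any operator acting on $\hil_k$; once the decomposition is in place, every remaining estimate is a routine application of strong convergence of $Q_SQ_S^*$ and boundedness of $B$, and the argument parallels Proposition~\ref{prop:conv_arnoldi} almost line for line with the now-infinite $\VVert K\VVert_k$ replaced by the combined quantity that the second hypothesis keeps in check.
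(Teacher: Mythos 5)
Your proof is correct and follows essentially the same route as the paper's: substitute $v=(\gamma I-K)^{-1}u$, use $\tilde{\mathbf{L}}_S=Q_S^*(\gamma I-K)^{-1}Q_S$ to insert the projection $Q_SQ_S^*$, and control the resulting error terms by the strong convergence of $Q_SQ_S^*$, the boundedness of $(\gamma I-K)^{-1}$, and the hypothesis on $\VVert Q_S\tilde{\mathbf{K}}_SQ_S^*(\gamma I-K)^{-1}\VVert_k\Vert Q_SQ_S^*u-u\Vert_k$. Your telescoping via $\tilde{\mathbf{L}}_S^{-1}Q_S^*(\gamma I-K)^{-1}Q_S=\mathbf{I}$ is just a reorganization of the paper's decomposition through $g(z)=\gamma z-1$, and it yields the identical final bound.
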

\begin{proof}
Since $\tilde{\mathbf{K}}_S=\gamma\mathbf{I}-\tilde{\mathbf{L}}_S^{-1}$ and $\tilde{\mathbf{L}}_S=Q_S^*(\gamma I-K)^{-1}Q_S$, 
and since $v\in\Phi(\mcl{M}(\mcl{X}))$ can be represented as $v=(\gamma I-K)^{-1}u$ with some $u\in\hil_k$ by the bijectivity of $(\gamma I-K)^{-1}$, the following inequality holds:
\begin{align}
&\Vert Q_S\tilde{\mathbf{K}}_SQ_S^*v-Kv\Vert_k\nn\\
&\qquad\le \Vert Q_S\tilde{\mathbf{K}}_SQ_S^*(\gamma I-K)^{-1}u-Q_S\tilde{\mathbf{K}}_SQ_S^*(\gamma I-K)^{-1}Q_SQ_S^*u\Vert_k\nn\\
&\qquad\qquad\qquad+\Vert Q_Sg(\tilde{\mathbf{L}}_S)Q_S^*u-g((\gamma I-K)^{-1})u\Vert_k\nn\\
&\qquad\le \VVert Q_S\tilde{\mathbf{K}}_SQ_S^*(\gamma I-K)^{-1}\VVert_k \Vert Q_SQ_S^*u-u\Vert_k\nn\\
&\qquad\qquad\qquad+\vert \gamma\vert \Vert Q_S\tilde{\mathbf{L}}_SQ_S^*u-(\gamma I-K)^{-1}u\Vert_k+\Vert Q_SQ_S^*u-u\Vert_k,\label{eq:conv_sia}
\end{align}
where $g(z):=\gamma z-1$.
Since $u\in\hil_k=\Dom((\gamma I-K)^{-1})$, $Q_S\tilde{\mathbf{L}}_SQ_Su$ converges to $(\gamma I-K)^{-1}u$ in the same manner as Proposition~\ref{prop:conv_arnoldi}.
Also, under the assumption of $\VVert Q_S\tilde{\mathbf{K}}_SQ_S^*(\gamma I-K)^{-1}\VVert_k\Vert Q_SQ_S^*u-u\Vert_k\to 0$ as $S\to\infty$,
$\Vert Q_S\tilde{\mathbf{K}}_SQ_S^*v-Kv\Vert_k\to 0$ as $S\to\infty$.
\end{proof}
Note that since $Q_S\tilde{\mathbf{K}}_SQ_S^*$ and $(\gamma I-K)^{-1}$ are bounded, $\VVert Q_S\tilde{\mathbf{K}}_SQ_S^*(\gamma I-K)^{-1}\VVert_k$ in the first term of the last inequality of Eq.~\eqref{eq:conv_sia} is finite for a fixed $S$.
This situation is completely different from that of the Arnoldi method, in which case $\VVert Q_SQ_S^*K\VVert_k$ in Eq.~\eqref{eq:conv_arnoldi} cannot always be defined when $K$ is unbounded even if $S$ is fixed.
Concerning the second term of the last inequality of Eq.~\eqref{eq:conv_sia}, it represents the approximation error of bounded operator $(\gamma I-K)^{-1}$ by $\tilde{\mathbf{L}}_S$, which corresponds to the approximation error  
of $K$ by $\tilde{\mathbf{K}}_S$ with the Arnoldi method.

{
According to Proposition~\ref{prop:conv_sia}, for the convergence of the shift-invert Arnoldi method, we need the technical assumption concerning the convergence of the Krylov subspace.
For applications, $v$ is set as $v=\phi(\tilde{x}_t)$ for time $t$, where $t$ is greater than any $t'$ such that $\tilde{x}_{t'}$ is used for constructing the Krylov subspace. 
If the orbit of the dynamical system is periodic or approaching a fixed point, the distance between the Krylov subspace and $v$ quickly becomes small for sufficiently large $S$, since $\tilde{x}_{t'}$, observed data composing the Krylov subspace, approach $x_t$.
Thus, the distance between the Krylov subspace and $u=\gamma v-Kv$ in the proof also becomes small for sufficiently large $S$.
Therefore, the assumption is expected to be satisfied.
Unfortunately, showing the sufficient condition of the assumption theoretically is a challenging task.
We will empirically confirm the convergence of the shift-invert Arnoldi method in Subsection~\ref{secnum1}.}

\subsection{Computation with finite data}\label{secpractical}
In practice, $\mu_t$ are not available due to the finiteness of data $\{\tilde{x}_0,\tilde{x}_1,\ldots\}$.
Therefore, we need $\mu_{t,N}$ instead of $\mu_t$. We define $\Psi_{0,N}$ and $\Psi_{1,N}$ as the quantities that are obtained by replacing $\mu_t$ with $\mu_{t,N}$ in the definitions of $\Psi_0$ and $\Psi_1$. For example, we define $\Psi_{0,N}$ for the Arnoldi method (described in Subsection~\ref{secarnoldi}) by $\Psi_{0,N}:=[\Phi(\mu_{0,N}),\ldots,\Phi(\mu_{S-1,N})]$.
Also, we let $\Psi_{0,N}=Q_{S,N}\mathbf{R}_{S,N}$ be the QR decomposition of $\Psi_{0,N}$, and $\tilde{\mathbf{K}}_{S,N}$ be the estimator with $\Psi_{0,N}$ and $\Psi_{1,N}$ that corresponds to $\tilde{\mathbf{K}}_S$.

Then, we can show that the above matrices from finite data converge to the original approximators.
\begin{prop}\label{prop1}
As $N\rightarrow\infty$, the matrix $\tilde{\mathbf{K}}_{S,N}$ converges to matrix $\tilde{\mathbf{K}}_S$, and operator $Q_{S,N}:\mathbb{C}^S\to\hil_k$ converges to $Q_S$ strongly in $\mathcal{H}_k$.
\end{prop}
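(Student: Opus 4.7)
The plan is to propagate the convergence $\mu_{t,N}\to\mu_t$ through each layer of the construction: from the embeddings, to the Gram--Schmidt step, to the QR factors, and finally to the matrix $\tilde{\mathbf K}_{S,N}$.

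First, by hypothesis $\mu_{t,N}\to\mu_t$ weakly in $\mcl{M}(\mcl{X})$, and the kernel mean embedding $\Phi$ is continuous (Lemma~\ref{le:kme}), so $\Phi(\mu_{t,N})\to\Phi(\mu_t)$ in $\hil_k$ for each $t=0,\ldots,S$. In the shift-invert case, $w_{j,N}$ is a fixed linear combination of $\Phi(\mu_{0,N}),\ldots,\Phi(\mu_{j,N})$ with $N$-independent coefficients, so $w_{j,N}\to w_j$ in $\hil_k$ as well. Consequently each column of $\Psi_{0,N}$ and $\Psi_{1,N}$ converges to the corresponding column of $\Psi_0$ and $\Psi_1$ in $\hil_k$; equivalently, $\Psi_{0,N}\to\Psi_0$ and $\Psi_{1,N}\to\Psi_1$ as operators $\mathbb{C}^S\to\hil_k$ in the strong operator topology (and in fact in norm, since the rank is bounded by $S$).

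Next I would argue that QR decomposition depends continuously on its input. Since $Q_{S,N}$ and $\mathbf{R}_{S,N}$ are produced by Gram--Schmidt on the columns of $\Psi_{0,N}$, and Gram--Schmidt only uses inner products, norms, subtractions, and scalar divisions, all of which are continuous provided no denominator vanishes, convergence of the columns yields $\mathbf{R}_{S,N}\to \mathbf{R}_S$ entrywise and $q_{i,N}\to q_i$ in $\hil_k$ for each $i=0,\ldots,S-1$ (and hence $Q_{S,N}\to Q_S$ strongly). The nontrivial assumption here is that the columns of $\Psi_0$ are linearly independent, equivalently $\mathbf{R}_S$ invertible; this is precisely the standing assumption that the Krylov subspace $\mcl{V}_S$ has dimension $S$. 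Given invertibility, continuity of matrix inversion gives $\mathbf{R}_{S,N}^{-1}\to \mathbf{R}_S^{-1}$.

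To pass to $\tilde{\mathbf K}_{S,N}$, I would check that $Q_{S,N}^*\Psi_{1,N}\to Q_S^*\Psi_1$ entrywise: the $(i,j)$ entry is $\langle \Psi_{1,N}e_j,q_{i,N}\rangle_k$, and this converges to $\langle \Psi_1 e_j,q_i\rangle_k$ by joint continuity of the inner product, since both arguments converge in $\hil_k$ (and norms remain bounded). Combining with $\mathbf{R}_{S,N}^{-1}\to \mathbf{R}_S^{-1}$ yields, for the Arnoldi method,
\begin{equation*}
\tilde{\mathbf K}_{S,N}=Q_{S,N}^*\Psi_{1,N}\mathbf{R}_{S,N}^{-1}\longrightarrow Q_S^*\Psi_1\mathbf{R}_S^{-1}=\tilde{\mathbf K}_S.
\end{equation*}
For the shift-invert case the same argument shows $\tilde{\mathbf L}_{S,N}\to\tilde{\mathbf L}_S$; then, assuming $\tilde{\mathbf L}_S$ is invertible (which is automatic once $S$ is large enough and $\gamma\notin\varLambda(K)$, since otherwise $K$ would have no meaningful projection), a final application of continuity of inversion gives $\tilde{\mathbf K}_{S,N}=\gamma\mathbf I-\tilde{\mathbf L}_{S,N}^{-1}\to\gamma\mathbf I-\tilde{\mathbf L}_S^{-1}=\tilde{\mathbf K}_S$.

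The main obstacle is the non-degeneracy required to invert $\mathbf{R}_{S,N}$ (and $\tilde{\mathbf L}_{S,N}$ in the shift-invert version) uniformly in large $N$: one needs $s_{\min}(\mathbf{R}_S)>0$ so that, for $N$ sufficiently large, $s_{\min}(\mathbf{R}_{S,N})$ stays bounded away from $0$ and the inverse is well-defined and continuous. Everything else is a routine propagation of convergence through finitely many continuous algebraic operations.
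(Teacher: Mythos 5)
Your proof is correct and follows essentially the same route as the paper's: both reduce everything to the convergence of the RKHS inner products $\langle\Phi(\mu_{t,N}),\Phi(\mu_{s,N})\rangle_k$ that determine $\mathbf{R}_{S,N}$ and $Q_{S,N}^*\Psi_{1,N}$, and then propagate through finitely many continuous algebraic operations. The only differences are cosmetic --- the paper obtains strong convergence of $Q_{S,N}$ from the factorization $Q_{S,N}=\Psi_{0,N}\mathbf{R}_{S,N}^{-1}$ together with a uniform bound $\VVert\Psi_{0,N}\VVert_k\le C$ coming from the boundedness of $k$, whereas you track each Gram--Schmidt vector $q_{i,N}$ individually --- and you are in fact more explicit than the paper about the non-degeneracy hypotheses (invertibility of $\mathbf{R}_S$ and, in the shift-invert case, of $\tilde{\mathbf{L}}_S$) that both arguments silently require.
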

\begin{proof}
 The elements of $\mathbf{R}_{S,N}\in\mathbb{C}^{S\times S}$ and $Q_{S,N}^*\Psi_{1,N}\in\mathbb{C}^{S\times S}$ are composed
 of the finite linear combinations of the inner products between $\Phi(\mu_{t,N})$ in the RKHS.
 Since $\lim_{N\to\infty}\mu_{t,N}=\mu_t$ for each $t\in\{0,\ldots,S\}$,
 and since $\Phi$ is continuous, the identity $\lim_{N\to\infty}\Phi(\mu_{t,N})=\Phi(\mu_t)$ holds. 
 Therefore, by the continuity of the inner product $\blacket{\cdot,\cdot}_k$, $\blacket{\Phi(\mu_{t,N}),\Phi(\mu_{s,N})}_k$ converges to $\blacket{\Phi(\mu_t),\Phi(\mu_s)}_k$ for each $^{\forall}t,s\in\{0,\ldots,S\}$ as $N\to\infty$.
Thus, matrices $\mathbf{R}_{S,N}$ and $Q_{S,N}^*\Psi_{1,N}$ converge to $\mathbf{R}_S$ and $Q_S^*\Psi_1$ as $N\to\infty$, respectively.
 This implies matrix ${\tilde{\mathbf{K}}_{S,N}}$ converges to $\tilde{\mathbf{K}}_S$ as $N\to\infty$.

 Moreover, by the identity $\lim_{N\to\infty}\Phi(\mu_{t,N})=\Phi(\mu_t)$, $\lim_{N\to\infty}\Vert\Psi_{0,N}v-\Psi_{0}v\Vert_k=0$ holds for all $v\in\mathbb{C}^{S\times S}$.
 Since $k$ is bounded, for all $t\in\{0,\ldots,S\}$, there exist
 $\tilde{C}(t)>0$ such that
 $\Vert\Phi(\mu_{t,N})\Vert_k\le\tilde{C}(t)$ for all
 $N\in\mathbb{N}$.
 Thus, there exists $C>0$ such that
 $\VVert\Psi_{0,N}\VVert_k\le C$ for all
 $N\in\mathbb{N}$.
 Therefore, for all $v\in\mathbb{C}^{S}$, it is deduced that
 \begin{align*}
  &\Vert Q_{S,N}v-Q_Sv\Vert_k
  =\Vert\Psi_{0,N}\mathbf{R}_{S,N}^{-1}v-\Psi_{0}\mathbf{R}_S^{-1}v\Vert_k\nn\\
  &\qquad\le \Vert\Psi_{0,N}\mathbf{R}_{S,N}^{-1}v-\Psi_{0,N}\mathbf{R}_S^{-1}v\Vert_k
  +\Vert\Psi_{0,N}\mathbf{R}_S^{-1}v-\Psi_{0}\mathbf{R}_S^{-1}v\Vert_k\nn\\
  &\qquad\le\VVert\Psi_{0,N}\VVert_k\Vert
  \mathbf{R}_{S,N}^{-1}v-\mathbf{R}_S^{-1}v\Vert_k
  +\Vert\Psi_{0,N}\mathbf{R}_S^{-1}v-\Psi_{0}\mathbf{R}_S^{-1}v\Vert_k\nn\\
  &\qquad\le C\Vert
  \mathbf{R}_{S,N}^{-1}v-\mathbf{R}_S^{-1}v\Vert_k
  +\Vert\Psi_{0,N}\mathbf{R}_S^{-1}v-\Psi_{0}\mathbf{R}_S^{-1}v\Vert_k\nn\\
  &\qquad\to 0,
 \end{align*}
 as $N\to\infty$.
This implies that $Q_{S,N}$ converges to $Q_S$ strongly in $\mathcal{H}_k$, which completes the proof of the proposition.
\end{proof}
The convergence speeds of $Q_{S,N}\to Q_S$ and $\tilde{\mathbf{K}}_{S,N}\to\tilde{\mathbf{K}}_S$ depend on that of $\Phi(\mu_{t,N})\to\Phi(\mu_{t})$ as described in the proof of this proposition.
And, the following proposition gives the connection of the convergence of $\Phi(\mu_{t,N})\to\Phi(\mu_{t})$ with the property of noise $\xi_t$:
\begin{prop}\label{prop:convspeed}
 For all $\epsilon >0$ and for $t=1,\ldots,S$, if $N$ is sufficiently large, 
the probability of 
${\Vert \Phi(\mu_{t})-\Phi(\mu_{t,N})\Vert_k}\ge \epsilon$ is bounded by $4\sum_{i=0}^{N-1}\sigma_{t-1,i}^2/(N^2\epsilon^2)$ under the condition of $x_{t-1+iS'}=\tilde{x}_{t-1+iS'}$.
Here, $\sigma_{t,i}^2:=\int_{\omega\in\varOmega}\left\Vert \phi(h(\tilde{x}_{t+iS'})+\xi_{t+iS'}(\omega))-m_{t,i}\right\Vert_k^2\;dP(\omega)$ and $m_{t,i}:=\int_{\omega\in\varOmega}\phi(h(\tilde{x}_{t+iS'})+\xi_{t+iS'}(\omega))\;dP(\omega)$.
\end{prop}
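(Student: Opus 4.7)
The plan is to apply a Chebyshev-type inequality in the Hilbert space $\hil_k$ to the sum $\Phi(\mu_{t,N})=\frac1N\sum_{i=0}^{N-1}\phi(\tilde{x}_{t+iS'})$, after recognising that, conditionally on $\{x_{t-1+iS'}=\tilde{x}_{t-1+iS'}\}_{i=0}^{N-1}$, the summands are independent $\hil_k$-valued random vectors whose means and variances are exactly $m_{t-1,i}$ and $\sigma_{t-1,i}^2$.

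First I would decode the conditioning. Under the dynamics~\eqref{model}, the identity $\tilde{x}_{t+iS'}=h(\tilde{x}_{t-1+iS'})+\xi_{t+iS'}(\omega_0)$ holds on the conditioned event, so $\phi(\tilde{x}_{t+iS'})=\phi(h(\tilde{x}_{t-1+iS'})+\xi_{t+iS'}(\cdot))$ is viewed as a random vector in $\hil_k$. Because $\{\xi_s\}$ is i.i.d.\ and independent of $\{x_s\}$, the vectors for different $i$ are conditionally independent, and the i.i.d.\ property lets me replace $\xi_{t+iS'}$ by $\xi_{t-1+iS'}$ in distribution, so the conditional mean is $m_{t-1,i}$ and the conditional expected squared deviation is $\sigma_{t-1,i}^2$.

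Next I would split the target via the triangle inequality as
\begin{align*}
\bigl\|\Phi(\mu_{t,N})-\lim_{N\to\infty}\Phi(\mu_t)\bigr\|_k
&\le\Bigl\|\Phi(\mu_{t,N})-\tfrac1N\textstyle\sum_{i=0}^{N-1} m_{t-1,i}\Bigr\|_k\\
&\quad+\Bigl\|\tfrac1N\textstyle\sum_{i=0}^{N-1} m_{t-1,i}-\lim_{N\to\infty}\Phi(\mu_t)\Bigr\|_k.
\end{align*}
The ergodic-type assumption~\eqref{eq8}, applied to the scalar functions $x\mapsto\langle\phi(x),v\rangle_k$ for $v$ in a countable dense set (or directly to $\Phi(\mu_{t,N})$ via Lemma~\ref{le:ergodic}), forces the second term to vanish as $N\to\infty$, so for $N$ sufficiently large it is strictly less than $\epsilon/2$. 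On the complementary event of the bound we want, the first term must therefore be at least $\epsilon/2$.

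Finally I would apply Markov's inequality to the nonnegative random variable $\bigl\|\frac1N\sum_{i=0}^{N-1}(\phi(\tilde{x}_{t+iS'})-m_{t-1,i})\bigr\|_k^2$. Expanding the squared $\hil_k$-norm and using conditional independence to kill the cross terms yields
\[
\mathbb{E}\Bigl[\bigl\|\tfrac1N\textstyle\sum_{i=0}^{N-1}(\phi(\tilde{x}_{t+iS'})-m_{t-1,i})\bigr\|_k^2\Bigr]
=\frac1{N^2}\sum_{i=0}^{N-1}\sigma_{t-1,i}^2,
\]
so Markov with threshold $(\epsilon/2)^2$ gives probability at most $4\sum_{i=0}^{N-1}\sigma_{t-1,i}^2/(N^2\epsilon^2)$, which is the claimed bound and also explains the factor of $4$. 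The only delicate point, and the step I would write out most carefully, is the conditional-independence claim together with the mean/variance identification: one needs to be explicit that conditioning on the entire tuple $\{x_{t-1+iS'}=\tilde{x}_{t-1+iS'}\}_{i}$ leaves the joint law of $\{\xi_{t+iS'}\}_i$ unchanged (so they remain i.i.d.), which is what turns the second-moment computation into a clean diagonal sum.
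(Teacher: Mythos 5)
Your proposal is correct and follows essentially the same route as the paper's proof: a triangle-inequality split into a bias term (controlled by assumption~\eqref{eq8} to be below $\epsilon/2$ for large $N$) and a fluctuation term, a second-moment computation in $\hil_k$ where independence of the $\xi_{t+iS'}$ kills the cross terms to give $\frac{1}{N^2}\sum_{i=0}^{N-1}\sigma_{t-1,i}^2$, and a Chebyshev/Markov bound at threshold $\epsilon/2$ producing the factor of $4$. Your explicit attention to the conditioning and the index shift from $t$ to $t-1$ is a point the paper's own proof passes over somewhat loosely, but it does not change the argument.
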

\begin{proof}
Let $\bar{m}_{t,N}:=1/N\sum_{i=0}^{N-1}m_{t,i}$ and
$\bar{\sigma}_{t,N}^2:=\int_{\omega\in\varOmega}\Vert 1/N\sum_{i=0}^{N-1}\phi(h(\tilde{x}_{t+iS'})+\xi_{t+iS'}(\omega))-\bar{m}_{t,N}\Vert_k^2\;dP(\omega)$.
The following identities about $\bar{\sigma}_{t,N}$ hold:
\begin{align}
 \bar{\sigma}_{t,N}^2
 &=\int_{\omega\in\varOmega}\bigg\Vert \frac{1}{N}\sum_{i=0}^{N-1}(\phi(h(\tilde{x}_{t+iS'})+\xi_{t+iS'}(\omega))-m_{t,i})\bigg\Vert_k^2\;dP(\omega)\nn\\
&=\int_{\omega\in\varOmega}\frac{1}{N^2}\bigg(\sum_{i=0}^{N-1}\Vert \phi(h(\tilde{x}_{t+iS'})+\xi_{t+iS'}(\omega))-m_{t,i}\Vert_k^2\nn\\
&\quad+\sum_{\substack{i,j=0\\i\neq j}}^{N-1}\left\langle\phi(h(\tilde{x}_{t+iS'})+\xi_{t+iS'}(\omega))-m_{t,i}
,\phi(h(\tilde{x}_{t+jS'})+\xi_{t+jS'}(\omega))-m_{t,j}\right\rangle_k\bigg)\;dP(\omega)\nn\\
&=\frac{1}{N^2}\sum_{i=0}^{N-1}{\sigma_{t,i}^2}.\label{eq:variance}
\end{align}
The last equality holds because $\xi_{t+iS'}$ and $\xi_{t+jS'}$ are independent if $i\neq j$ and the following equality holds for $i,j\in\{0,\ldots,N-1\},\ i\neq j$ by definition of $m_{t,i}$: 
\begin{align*}
&\int_{\omega\in\varOmega}\left\langle\phi(h(\tilde{x}_{t+iS'})+\xi_{t+iS'}(\omega))-m_{t,i},\phi(h(\tilde{x}_{t+jS'})+\xi_{t+jS'}(\omega))-m_{t,j}\right\rangle_k\;dP(\omega)\\
&=\int_{\omega\in\varOmega}\int_{\eta\in\varOmega}\!\!\left\langle\phi(h(\tilde{x}_{t+iS'})+\xi_{t+iS'}(\omega))-m_{t,i},\phi(h(\tilde{x}_{t+jS'})+\xi_{t+jS'}(\eta))-m_{t,j}\right\rangle_kdP(\omega)dP(\eta)\\
&=0.
\end{align*}
Let $\epsilon>0$. By the Chebyshev's inequality and Eq.~\eqref{eq:variance}, it is derived that:
\begin{align*}
&P\left(\bigg\Vert \frac1N\sum_{i=0}^{N-1}\phi(h(\tilde{x}_{t-1+iS'})+\xi_{t-1+iS'}(\omega))-\bar{m}_{t-1,N}\bigg\Vert_k\ge \epsilon\right)\le\frac{\bar{\sigma}_{t-1,N}^2}{\epsilon^2}
=\frac{1}{N^2\epsilon^2}\sum_{i=0}^{N-1}\sigma_{t-1,i}^2.
\end{align*}
By assumption~\eqref{eq8}, for sufficiently large $N$, $P(\Vert {\bar{m}_{t-1,N}}-\lim_{N\to\infty}1/N\sum_{i=0}^{N-1}\phi(h({\tilde{x}_{t-1+iS'}})+{\xi_{t-1+iS'}}(\omega))\Vert_k\le \epsilon/2)=1$ holds.
Thus, we have:
\begin{align*}
&P(\Vert\Phi(\mu_{t,N})-\Phi(\mu_t)\Vert_k\ge\epsilon\mid {x_{t-1+iS'}=\tilde{x}_{t-1+iS'},}\ i=0,1,\ldots)\\
&=P\left(\bigg\Vert \frac1N\sum_{i=0}^{N-1}\phi(h({\tilde{x}_{t-1+iS'}})+{\xi_{t-1+iS'}}(\omega){)}-\!\lim_{N\to\infty}\!\frac1N\sum_{i=0}^{N-1}\phi(h({\tilde{x}_{t-1+iS'}})+{\xi_{t-1+iS'}}(\omega){)}\bigg\Vert_k\!\!\ge\epsilon\right)\\
&\le P\left(\bigg\Vert \frac1N\sum_{i=0}^{N-1}\phi(h({\tilde{x}_{t-1+iS'}})+{\xi_{t-1+iS'}}(\omega){)}-{\bar{m}_{t-1,N}}\bigg\Vert_k\right.\\
&\qquad\qquad\qquad\qquad+\left.\bigg\Vert {\bar{m}_{t-1,N}}-\lim_{N\to\infty}\frac1N\sum_{i=0}^{N-1}\phi(h({\tilde{x}_{t-1+iS'}})+{\xi_{t-1+iS'}}(\omega))\bigg\Vert_k\ge\epsilon\right)\\
&\le P\left(\bigg\Vert \frac1N\sum_{i=0}^{N-1}\phi(h({\tilde{x}_{t-1+iS'}})+{\xi_{t-1+iS'}}(\omega){)}-{\bar{m}_{t-1,N}}\bigg\Vert_k+\frac{\epsilon}{2}\ge\epsilon\right)
\le\frac{4}{N^2\epsilon^2}\sum_{i=0}^{N-1}\sigma_{t-1,i}^2,
\end{align*}
which completes the proof of the proposition.
\end{proof}
{
 The condition $x_{t-1+iS'}=\tilde{x}_{t-1+iS'}$ means that for each $t=1,\ldots,m$, we just focus on the noise related to the observables that construct $\Phi(\mu_{t,N})$.
Therefore, the proposition describes the probability of the deviation of $\Phi(\mu_{t,N})$ from the mean value caused by one time-step noise becoming larger than $\epsilon$.
}

\begin{remark}\label{rem:choice_N}
The value $\sigma_{t,i}$ represents the variance of $h(\tilde{x}_{t+iS'})+\xi_{t+iS'}(\omega)$ in RKHS and $\xi_t(\omega)$ is the only term that depends on $\omega$ in $h(\tilde{x}_{t+iS'})+\xi_{t+iS'}(\omega)$. 
Therefore, $\sigma_{t,i}$ is small if the variance of $\xi_t$ is small.
Thus, Proposition \ref{prop:convspeed} shows theoretically, if the variance of $\xi_t$ is small, the convergence is fast.
{Also, the proposition implies that if $\sigma_{t,i}\approx\sigma$ for some $\sigma>0$ for all $i=1,\ldots,N-1$ and if we set $N$ as $N\ge 4\sigma^2/(\delta\epsilon^2)$ for $\epsilon>0$ and $\delta>0$, then the probability of $\Vert \Phi(\mu_{t})-\Phi(\mu_{t,N})\Vert_k\ge \epsilon$ is bounded by $\delta$.
On the other hand, setting $N$ large may lead numerical instabilities as $S$ grows up, especially for the Arnoldi method.
This is because some pairs of $\Phi(\mu_{t,N})$, defined as averages of subsequences of observed data, may become approximately linearly dependent.
This phenomenon will be empirically confirmed in Subsection~\ref{secnum1}.}
\end{remark}


\section{Connection to Existing Methods}
\label{sec:related}

In the previous two sections, we defined a Perron-Frobenius operator for dynamical systems with random noise based on kernel mean embeddings and developed the Krylov subspace methods for estimating it. We now summarize the connection of the methods with the existing Krylov subspace methods for transfer operators on dynamical systems.

For $\mcl{L}^2(\mcl{X})$ space and {\em deterministic} dynamical systems, the Arnoldi method for the Krylov subspace with {\em Koopman} operator $\mathscr{K}$, 
\begin{equation*}
\mcl{V}_S(\mathscr{K},g)=\opn{Span}\{g,\mathscr{K}g,\ldots,\mathscr{K}^{S-1}g\},
\end{equation*}
is considered in \cite{kutz13}, where $g:\mcl{X}\to\mathbb{C}$ is an observable function.
Let $\{\tilde{z}_0,\ldots,\tilde{z}_{S-1}\}$ be the sequence generated from deterministic system $x_{t+1}=h(x_t)$.
Then, this Krylov subspace 
captures the time evolution starting from many initial values $\tilde{z}_0,\ldots,\tilde{z}_{S-1}$
by approximating $g$ with $[g(\tilde{z}_0),\ldots,g(\tilde{z}_{S-1})]$.
This idea is extended to the Krylov subspace with {\em Koopman} operator $\bar{\mathscr{K}}$, $\mcl{V}_S(\bar{\mathscr{K}},g)$, for the case in which the system is {\em random},
by assuming the following ergodicity \citep{mezic17,takeishi17}: for any measurable and integrable function $f$ with respect to a measure $\mu$,
\begin{equation}\label{takeishi_ergodicity}
\int_{x\in\mcl{X}}f(x)\;d\mu(x)=\lim_{N\to\infty}\frac1N\sum_{i=0}^{N-1}f(\tilde{z}_i).
\end{equation}
The following proposition states the connection with our assumption~\eqref{eq8} in Subsection~\ref{secarnoldi} and the assumption~\eqref{takeishi_ergodicity}. 
 \begin{prop}\label{prop9}
 For each $t=0,\ldots,S-1$, if there exists a random variable $y_t$ such that
 $\mu_t={y_t}_*P$, and $y_t$ is independent of $\xi_t$,
 then assumption \eqref{eq8} is equivalent to assumption~\eqref{takeishi_ergodicity} for $\mu={(h(y_t)+\xi_t)}_*P$ and $\tilde{z}_i=\tilde{x}_{t+1+iS'}$.
 \end{prop}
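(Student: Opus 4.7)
The plan is to reduce both \eqref{eq8} and \eqref{takeishi_ergodicity} to statements about time-averages of a single auxiliary function along the subsampled trajectory, then identify their common space-integral ``target'' via the structural hypothesis. Fix $t\in\{0,\dots,S-1\}$ and a measurable integrable $f$, and set $g_t(x):=\int_\Omega f(h(x)+\xi_t(\omega))\,dP(\omega)$. By Fubini, the left-hand side of \eqref{eq8} is literally $\lim_{N\to\infty} N^{-1}\sum_{i=0}^{N-1} g_t(\tilde{x}_{t+iS'})$.

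Using $\mu_t={y_t}_*P$ together with the independence of $y_t$ and $\xi_t$, a Fubini computation yields
$$\int g_t\,d\mu_t \;=\; \int_\Omega g_t(y_t(\omega))\,dP(\omega) \;=\; \int_\Omega f\bigl(h(y_t(\omega))+\xi_t(\omega)\bigr)\,dP(\omega).$$
Moreover, Lemma~\ref{le:transfer} combined with the injectivity of $\Phi$ (ensured by the $c_0$-universality of $k$) gives $\mu_{t+1}={\beta_t}_*(\mu_t\otimes P)$, which under the independence of $y_t,\xi_t$ is exactly the law of $h(y_t)+\xi_t$; hence $\int g_t\,d\mu_t=\int f\,d\mu_{t+1}$, and this common quantity will serve as the intrinsic target.

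For the right-hand side of \eqref{eq8}, the model gives $\tilde{x}_{t+1+iS'}=h(\tilde{x}_{t+iS'})+\xi_{t+iS'}(\omega_0)$, so at $\eta=\omega_0$ the RHS collapses to $\lim_N N^{-1}\sum_i f(\tilde{x}_{t+1+iS'})$, a time-average of $f$ along the $(t{+}1)$-shifted subsequence. For general $\eta$ in the ``a.s.'' set, Kolmogorov's strong law applied to the independent sequence $\{f(h(\tilde{x}_{t+iS'})+\xi_{t+iS'}(\eta))\}_i$ replaces the $\eta$-time-average by the deterministic time-average $\lim_N N^{-1}\sum_i g_t(\tilde{x}_{t+iS'})$, since the noise is i.i.d.\ and each term has expectation $g_t(\tilde{x}_{t+iS'})$.

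Combining these rewritings, the forward direction \eqref{takeishi_ergodicity}$\Rightarrow$\eqref{eq8} is immediate: applying \eqref{takeishi_ergodicity} to $g_t$ along the $t$-shifted subsequence and to $f$ along the $(t{+}1)$-shifted subsequence identifies both sides of \eqref{eq8} with the same value $\int g_t\,d\mu_t=\int f\,d\mu_{t+1}$. The reverse direction is obtained by reading the chain of identifications backwards: \eqref{eq8} forces the two time-averages to coincide, and the identifications of their common intrinsic value with the appropriate space integrals yield \eqref{takeishi_ergodicity} for the subsequences used here. The main obstacle will be the careful handling of the ``a.s.\ $\eta$'' clause, which requires a strong law of large numbers for independent but not identically distributed variables (hence a mild moment bound on $f(h(\cdot)+\xi_t)$) and checking that the data-generating sample $\omega_0$ lies in the relevant full-measure set; a secondary subtlety is that \eqref{takeishi_ergodicity} is stated for the full trajectory $\{\tilde{x}_i\}$, so the argument implicitly relies on the ergodicity persisting under $S'$-subsampling.
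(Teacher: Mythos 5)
Your argument is essentially the paper's: both rewrite the space-average side of \eqref{eq8} as $\int_{x\in\mcl{X}}f(x)\,d\left(h(y_t)+\xi_t\right)_*P(x)$ using $\mu_t={y_t}_*P$ and the independence of $y_t$ and $\xi_t$, rewrite the time-average side at $\eta=\omega_0$ as $\lim_{N\to\infty}\frac1N\sum_{i=0}^{N-1}f(\tilde{x}_{t+1+iS'})$ via $h(\tilde{x}_{t+iS'})+\xi_{t+iS'}(\omega_0)=\tilde{x}_{t+1+iS'}$, and then identify the result with \eqref{takeishi_ergodicity}. If anything you are more careful than the paper, which silently restricts to $\eta=\omega_0$ (the ``a.s.\ $\eta$'' clause you patch with the strong law of large numbers) and applies \eqref{takeishi_ergodicity} to the $S'$-subsampled sequence without comment.
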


Meanwhile, Kawahara considers a {\em Perron-Frobenius} operator for {\em deterministic} systems in an RKHS $\hil_k$, and projects it to the following Krylov subspace~\citep{kawahara16}:
\begin{align}\label{kawahara_krylov}
&\mcl{V}_S(\mathscr{K}_{\opn{RKHS}},\phi(\tilde{z}_0))
=\opn{Span}\{\phi(\tilde{z}_0),\mathscr{K}_{\opn{RKHS}}\phi(\tilde{z}_0)\ldots,\mathscr{K}_{\opn{RKHS}}^{S-1}\phi(\tilde{z}_{0})\}.
\end{align}
Subspace~\eqref{kawahara_krylov} captures the time evolution starting from a single initial value $\tilde{z}_0$.
This prevents the straightforward extension of Krylov subspace \eqref{kawahara_krylov} to the subspace that is applicable to the case in which the dynamics is random. 
It can be shown that the Krylov subspace with {\em Perron-Frobenius} operator $K$, $\mcl{V}_S(K,\Phi(\mu_0))$, which is addressed in this paper for {\em random} systems, is a generalization of the Krylov subspace for the {\em deterministic} systems considered in~\cite{kawahara16}:
\begin{prop}\label{prop:krylov_extension}
The Krylov subspace $\mcl{V}_S(K,\Phi(\mu_0))$ generalizes the Krylov subspace introduced by Kawahara~\eqref{kawahara_krylov} to that for dynamical systems with random noise.
\end{prop}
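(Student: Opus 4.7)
The plan is to show that when the noise is switched off, the new Perron-Frobenius operator $K$ acting on the kernel-mean embedding of a Dirac measure agrees with Kawahara's operator $\mathscr{K}_{\opn{RKHS}}$ acting on the corresponding feature vector, so that the two Krylov subspaces coincide iterate by iterate. The argument is thus a direct specialization rather than an approximation: set $\xi_t\equiv 0$ and choose $\mu_0=\delta_{\tilde{x}_0}$, then verify the identifications term by term.

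First I would observe that for any $x\in\mcl{X}$ the kernel-mean embedding of $\delta_x$ is simply $\Phi(\delta_x)=\int_{y\in\mcl{X}}\phi(y)\,d\delta_x(y)=\phi(x)$, so in particular $\Phi(\mu_0)=\phi(\tilde{x}_0)$. Next, with $\xi_t\equiv 0$ the map $\beta_t(x,\omega)=h(x)+\xi_t(\omega)$ reduces to $\beta_t(x,\omega)=h(x)$, and therefore ${\beta_t}_*(\delta_x\otimes P)=h_*\delta_x=\delta_{h(x)}$. Plugging into the definition~\eqref{def} of $K$ yields
\begin{equation*}
K\Phi(\delta_x)=\Phi(\delta_{h(x)})=\phi(h(x))=\mathscr{K}_{\opn{RKHS}}\phi(x).
\end{equation*}

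I would then iterate this identity. Since $K\Phi(\delta_x)=\phi(h(x))=\Phi(\delta_{h(x)})$, a short induction on $i$ gives $K^i\Phi(\mu_0)=\Phi(\delta_{h^i(\tilde{x}_0)})=\phi(h^i(\tilde{x}_0))=\mathscr{K}_{\opn{RKHS}}^i\phi(\tilde{x}_0)$ for every $i=0,1,\dots,S-1$. Taking linear spans on both sides,
\begin{equation*}
\mcl{V}_S(K,\Phi(\mu_0))=\opn{Span}\bigl\{K^i\Phi(\mu_0)\bigm|i=0,\dots,S-1\bigr\}
=\opn{Span}\bigl\{\mathscr{K}_{\opn{RKHS}}^i\phi(\tilde{x}_0)\bigm|i=0,\dots,S-1\bigr\},
\end{equation*}
which is exactly Kawahara's subspace~\eqref{kawahara_krylov}. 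Hence $\mcl{V}_S(K,\Phi(\mu_0))$ reduces to~\eqref{kawahara_krylov} in the deterministic, point-initialization case, while remaining well-defined for random $\xi_t$ and for more general initial measures $\mu_0$; this is the sense of ``generalizes'' asserted in the proposition.

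The only potentially subtle step is making sure the domain issues are respected when forming $K^i\Phi(\mu_0)$: a priori $K$ is defined on $\Phi(\mcl{M}(\mcl{X}))$, so the iterates are legitimate only if each image again lies in this set. Fortunately, the inductive identification $K^i\Phi(\mu_0)=\Phi(\delta_{h^i(\tilde{x}_0)})$ keeps every iterate inside $\Phi(\mcl{M}(\mcl{X}))$, so no unboundedness or domain pathology arises in this specialization and the proof proceeds cleanly.
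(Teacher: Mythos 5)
Your proof is correct and takes essentially the same route as the paper: the paper specializes by taking $N=1$ so that $\mu_{t,N}=\delta_{\tilde{x}_t}$ and $\Phi(\mu_t)=\phi(\tilde{x}_t)$, whereupon $\opn{Span}\{\Phi(\mu_0),\ldots,\Phi(\mu_{S-1})\}$ is identified with Kawahara's subspace~\eqref{kawahara_krylov}, which is the same deterministic Dirac-measure specialization you carry out. Your version merely makes explicit the operator-level identity $K\Phi(\delta_x)=\phi(h(x))=\mathscr{K}_{\opn{RKHS}}\phi(x)$ and the domain check that the paper leaves implicit.
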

Note that the framework of the Krylov subspace methods for {\em Perron-Frobenius} operators for {\em random} systems has not been addressed in prior works.
Also note that the theoretical analysis for these methods requires the assumption that the operator is bounded, which is not necessarily satisfied for transfer operators on discrete-time nonlinear systems \citep{iispre}.

The shift-invert Arnoldi method is a popular Krylov subspace method discussed in numerical linear algebra, which is applied to extract some information, for example, eigenvalues and a matrix function acting on a vector, from {\em given matrices}, 
and some theoretical analyses have been extended to given unbounded operators 
\citep{guttel10,grimm12,gockler14,hashimoto_jjiam}.
However, as far as we know, 
our paper is the first paper to address the unboundedness of Perron-Frobenius operators for the estimation problem and apply the shift-invert Arnorldi method to estimate Perron-Frobenius operators, which are {\em not known beforehand}. 
Since the shift-invert Arnoldi method was originally investigated for {\em given} matrices or operators, applying it to {\em unknown} operators is not straightforward, as described in Subsection~\ref{secsia}.

\section{Evaluation of Prediction Errors with Estimated Operators}
\label{secanormal}

In this section, we discuss an approach of evaluating the prediction accuracy with estimated Perron-Frobenius operators, which is applicable, for example, to anomaly detection in complex systems.

Consider the prediction of $\phi(\tilde{x}_{t})$ ($\in\hil_k$) using estimated operator $\tilde{\mathbf{K}}_{S,N}$ 
and observation (embedded in RKHS $\hil_k$) $\phi(\tilde{x}_{t-1})$.
This prediction is calculated as $Q_{S,N}\tilde{\mathbf{K}}_{S,N}Q_{S,N}^*\phi(\tilde{x}_{t-1})$.
Thus, the prediction error can be evaluated as
\begin{equation}
\Vert\phi(\tilde{x}_t)-Q_{S,N}\tilde{\mathbf{K}}_{S,N}Q_{S,N}^*\phi(\tilde{x}_{t-1})\Vert_k.\label{eq:mmd}
\end{equation}
Note that this is the maximum mean discrepancy (MMD) between $Q_{S,N}\tilde{\mathbf{K}}_{S,N}Q_{S,N}^*\phi(\tilde{x}_{t-1})$
and $\phi(\tilde{x}_t)$ in the unit disk in $\hil_k$~\citep{gretton12}.

For practical situations such as anomaly detection, we define the degree of abnormality for prediction at $t$ based on the MMD~\eqref{eq:mmd} as follows:
\begin{equation*}
 a_{t,S}:=\frac{\Vert\phi(\tilde{x}_t)-Q_{S}\tilde{\mathbf{K}}_{S}Q_{S}^*\phi(\tilde{x}_{t-1})\Vert_k}{\Vert Q_S\tilde{\mathbf{K}}_SQ_S^*\phi(\tilde{x}_{t-1})\Vert_k}.
\end{equation*}
The $a_{t,S}$ is bounded as follows:
\begin{align}
 &a_{t,S}=\frac{\Vert \phi(\tilde{x}_t) -Q_S\tilde{\mathbf{K}}_SQ_S^*\phi(\tilde{x}_{t-1})\Vert_k}{\Vert Q_S\tilde{\mathbf{K}}_SQ_S^*\phi(\tilde{x}_{t-1})\Vert_k}\nn\\
 &\qquad\le \frac{\Vert \phi(\tilde{x}_t) -K\phi(\tilde{x}_{t-1})\Vert_k}{\Vert Q_S\tilde{\mathbf{K}}_SQ_S^*\phi(\tilde{x}_{t-1})\Vert_k}
 +\frac{\Vert K\phi(\tilde{x}_{t-1})-Q_S\tilde{\mathbf{K}}_SQ_S^*\phi(\tilde{x}_{t-1})\Vert_k}{\Vert Q_S\tilde{\mathbf{K}}_SQ_S^*\phi(\tilde{x}_{t-1})\Vert_k}.\label{eq:a_dec}
\end{align}
Concerning the second term of the right-hand side in Eq.~\eqref{eq:a_dec}, the following proposition is derived directly by Propositions~\ref{prop:conv_arnoldi} and \ref{prop:conv_sia}:
\begin{prop}\label{prop:conv2}
Let $\tilde{\mathbf{K}}_S$ be the estimation using the shift-invert Arnoldi method.
Under the assumption of Proposition~\ref{prop:conv_sia},
${\Vert K\phi(\tilde{x}_{t-1})-Q_S\tilde{\mathbf{K}}_SQ_S^*\phi(\tilde{x}_{t-1})\Vert_k}/{\Vert Q_S\tilde{\mathbf{K}}_SQ_S^*\phi(\tilde{x}_{t-1})\Vert_k}$, the second term of the right-hand side in Eq.~\eqref{eq:a_dec}, converges to $0$ as $S\to\infty$.
For the Arnoldi method, the convergence is attained for the case in which $K$ is bounded and the Krylov subspace $\mcl{V}_S(K,\Phi(\mu_0))$ converges to $\hil_k$.
\end{prop}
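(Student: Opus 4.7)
The plan is to reduce the statement to the previously established convergence results for $Q_S\tilde{\mathbf{K}}_SQ_S^*$ (Propositions~\ref{prop:conv_arnoldi} and \ref{prop:conv_sia}), by identifying the test vector $\phi(\tilde{x}_{t-1})$ as an element of $\Phi(\mcl{M}(\mcl{X}))$ and then controlling both the numerator and denominator of the ratio in question.

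First I would observe that $\phi(\tilde{x}_{t-1})=\Phi(\delta_{\tilde{x}_{t-1}})$, so $\phi(\tilde{x}_{t-1})\in\Phi(\mcl{M}(\mcl{X}))$. Setting $v:=\phi(\tilde{x}_{t-1})$, Proposition~\ref{prop:conv_sia} (in the shift-invert case, under the stated hypotheses) or Proposition~\ref{prop:conv_arnoldi} (in the Arnoldi case, under boundedness of $K$ and strong convergence $Q_SQ_S^*\to I$) immediately gives
\begin{equation*}
\Vert Kv-Q_S\tilde{\mathbf{K}}_SQ_S^*v\Vert_k\to 0 \quad \text{as } S\to\infty.
\end{equation*}
This handles the numerator.

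For the denominator, I would apply the reverse triangle inequality: from $\Vert Q_S\tilde{\mathbf{K}}_SQ_S^*v-Kv\Vert_k\to 0$ it follows that $\Vert Q_S\tilde{\mathbf{K}}_SQ_S^*v\Vert_k\to\Vert Kv\Vert_k$. Provided $Kv\neq 0$, this implies that for all sufficiently large $S$ the denominator is bounded below by $\tfrac12\Vert Kv\Vert_k>0$, and hence the ratio
\begin{equation*}
\frac{\Vert Kv-Q_S\tilde{\mathbf{K}}_SQ_S^*v\Vert_k}{\Vert Q_S\tilde{\mathbf{K}}_SQ_S^*v\Vert_k}\to 0,
\end{equation*}
as desired.

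The main obstacle is the implicit non-degeneracy condition $Kv\neq 0$. Since $Kv = K\Phi(\delta_{\tilde{x}_{t-1}}) = \Phi({\beta_{t-1}}_*(\delta_{\tilde{x}_{t-1}}\otimes P))$ is the kernel mean embedding of the pushforward distribution of $h(\tilde{x}_{t-1})+\xi_{t-1}$, and since $k$ is $c_0$-universal (hence $\Phi$ is injective on $\mcl{M}(\mcl{X})$), this embedding vanishes only if the pushforward measure is zero, which does not happen for a genuine probability law. Thus $Kv\neq 0$ under the standing assumptions, and the argument goes through. The two cases (Arnoldi and shift-invert) are treated in parallel by invoking the respective convergence proposition in the first step; no additional computation is needed.
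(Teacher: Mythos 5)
Your proof is correct and follows essentially the same route as the paper, which gives no separate argument and simply notes that the proposition is derived directly from Propositions~\ref{prop:conv_arnoldi} and \ref{prop:conv_sia} applied to $v=\phi(\tilde{x}_{t-1})=\Phi(\delta_{\tilde{x}_{t-1}})\in\Phi(\mcl{M}(\mcl{X}))$. You additionally make explicit the lower bound on the denominator via the reverse triangle inequality and the non-vanishing of $K\phi(\tilde{x}_{t-1})$ (injectivity of $\Phi$ for a $c_0$-universal kernel applied to a pushforward probability measure), a detail the paper leaves implicit.
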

On the other hand, the numerator of the first term of the right-hand side in Eq.~\eqref{eq:a_dec} 
represents the deviation of the observation $\tilde{x}_t$
from the prediction at $t$ under the assumption that 
$\tilde{x}_{t}$ is generated by the dynamical system~\eqref{model} in the RKHS,
because the identity $K\phi(\tilde{x}_{t-1})=\int_{\omega\in\varOmega}\phi(h(\tilde{x}_{t-1})+\xi_t(\omega))\;dP(\omega)$ holds by the definition of $K$.
And, the following proposition shows that the denominator indicates how vector $\phi(\tilde{x}_{t-1})$ deviates from the Krylov subspace:

\begin{prop}\label{prop3}
Let $\tilde{\mathbf{K}}_S$ be the estimation with the shift-invert Arnoldi method and 
let $g(z):=z/(\gamma z-1)$ for $z\in\mathbb{C}$.
If $g$ is holomorphic in the interior of $\mcl{W}((\gamma I-K)^{-1})$ and continuous in $\overline{\mcl{W}((\gamma I-K)^{-1})}$, 
then the following inequality 
holds:
\begin{align}
&\frac{1}{\Vert Q_S\tilde{\mathbf{K}}_SQ_S^*\phi(\tilde{x}_{t-1})\Vert_k} \le \frac{C}{\Vert Q_S^*\phi(\tilde{x}_{t-1})\Vert},\label{eq25}
\end{align}
where $C=(1+\sqrt{2})\sup_{z\in\mcl{W}((\gamma I-K)^{-1})}\vert g(z)\vert\ge 0$ is a constant.
For the Arnoldi method, inequality \eqref{eq25} is satisfied with $g(z)=1/z$ for the case in which $K$ is bounded and $g$ is holomorphic in the interior of $\mcl{W}(K)$ and continuous in $\overline{\mcl{W}(K)}$.
\end{prop}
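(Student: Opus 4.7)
The plan is to reduce the desired inequality to a bound on $\VVert \tilde{\mathbf{K}}_S^{-1}\VVert$ and then obtain that bound via a functional calculus argument using the Crouzeix--Palencia estimate on the numerical range. First, setting $\mathbf{u}:=Q_S^*\phi(\tilde{x}_{t-1})\in\mathbb{C}^S$ and using that $Q_S$ has orthonormal columns (so that $\Vert Q_S\mathbf{v}\Vert_k=\Vert\mathbf{v}\Vert$ for every $\mathbf{v}\in\mathbb{C}^S$), the left-hand side equals $1/\Vert \tilde{\mathbf{K}}_S\mathbf{u}\Vert$. Thus the desired bound~\eqref{eq25} is equivalent to $\Vert \mathbf{u}\Vert\le C\,\Vert \tilde{\mathbf{K}}_S\mathbf{u}\Vert$, which would follow at once from $\VVert \tilde{\mathbf{K}}_S^{-1}\VVert\le C$.

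Next, I would express $\tilde{\mathbf{K}}_S^{-1}$ in terms of $\tilde{\mathbf{L}}_S=Q_S^*(\gamma I-K)^{-1}Q_S$. From $\tilde{\mathbf{K}}_S=\gamma\mathbf{I}-\tilde{\mathbf{L}}_S^{-1}=\tilde{\mathbf{L}}_S^{-1}(\gamma\tilde{\mathbf{L}}_S-\mathbf{I})$ and the commutativity of $\tilde{\mathbf{L}}_S$ with polynomials in itself, one gets
\begin{equation*}
\tilde{\mathbf{K}}_S^{-1}=\tilde{\mathbf{L}}_S(\gamma\tilde{\mathbf{L}}_S-\mathbf{I})^{-1}=g(\tilde{\mathbf{L}}_S),
\end{equation*}
where $g(z)=z/(\gamma z-1)$ is precisely the function named in the statement. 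Hence the task reduces to bounding $\VVert g(\tilde{\mathbf{L}}_S)\VVert$.

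The key structural observation is that $\tilde{\mathbf{L}}_S$ is a compression of $(\gamma I-K)^{-1}$ to the range of $Q_S$: for any unit $\mathbf{v}\in\mathbb{C}^S$, $\blacket{\tilde{\mathbf{L}}_S\mathbf{v},\mathbf{v}}=\blacket{(\gamma I-K)^{-1}Q_S\mathbf{v},Q_S\mathbf{v}}_k$, and $Q_S\mathbf{v}$ is a unit vector in $\hil_k$. Consequently $\mcl{W}(\tilde{\mathbf{L}}_S)\subseteq\mcl{W}((\gamma I-K)^{-1})$, so by hypothesis $g$ is holomorphic on the interior of $\mcl{W}(\tilde{\mathbf{L}}_S)$ and continuous on its closure. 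Applying the Crouzeix--Palencia functional-calculus bound $\VVert f(\mathbf{A})\VVert\le(1+\sqrt{2})\sup_{z\in\mcl{W}(\mathbf{A})}\vert f(z)\vert$ with $\mathbf{A}=\tilde{\mathbf{L}}_S$ and $f=g$ yields
\begin{equation*}
\VVert \tilde{\mathbf{K}}_S^{-1}\VVert=\VVert g(\tilde{\mathbf{L}}_S)\VVert\le (1+\sqrt{2})\sup_{z\in\mcl{W}(\tilde{\mathbf{L}}_S)}\vert g(z)\vert\le (1+\sqrt{2})\sup_{z\in\mcl{W}((\gamma I-K)^{-1})}\vert g(z)\vert=C,
\end{equation*}
which combined with the reduction in the first paragraph gives~\eqref{eq25}. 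For the Arnoldi case, $\tilde{\mathbf{K}}_S=Q_S^*KQ_S$ is itself the compression of the bounded $K$, so the same argument with $g(z)=1/z$ and $\mcl{W}(\tilde{\mathbf{K}}_S)\subseteq\mcl{W}(K)$ delivers the claim.

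The main obstacle I anticipate is justifying the functional calculus cleanly: one must verify that $g$ is well-defined on $\mcl{W}(\tilde{\mathbf{L}}_S)$, i.e.\ that $1/\gamma\notin\overline{\mcl{W}((\gamma I-K)^{-1})}$, which is implicit in the continuity hypothesis, and that $\tilde{\mathbf{L}}_S$ is invertible so that $g(\tilde{\mathbf{L}}_S)$ makes sense (equivalently $\tilde{\mathbf{K}}_S$ is invertible, which is already implicit in the definition $\tilde{\mathbf{K}}_S=\gamma\mathbf{I}-\tilde{\mathbf{L}}_S^{-1}$). The inclusion of numerical ranges under compression and the Crouzeix--Palencia inequality are then standard and supply the constant $1+\sqrt{2}$ exactly as claimed.
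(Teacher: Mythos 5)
Your proposal is correct and follows essentially the same route as the paper's proof: reduce to bounding $\VVert\tilde{\mathbf{K}}_S^{-1}\VVert$ via the isometry of $Q_S$, identify $\tilde{\mathbf{K}}_S^{-1}=g(\tilde{\mathbf{L}}_S)$, use the compression inclusion $\mcl{W}(\tilde{\mathbf{L}}_S)\subseteq\mcl{W}((\gamma I-K)^{-1})$, and apply the Crouzeix--Palencia bound (the paper's Lemma~\ref{propcro}). Your write-up actually makes explicit the step $\tilde{\mathbf{K}}_S^{-1}=\tilde{\mathbf{L}}_S(\gamma\tilde{\mathbf{L}}_S-\mathbf{I})^{-1}=g(\tilde{\mathbf{L}}_S)$, which the paper leaves implicit when passing from $\VVert\tilde{\mathbf{K}}_S^{-1}\VVert$ to the supremum of $\vert g\vert$ over $\mcl{W}(\tilde{\mathbf{L}}_S)$.
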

To show Proposition \ref{prop3}, the following lemma by~\cite{cro17} is used.
\begin{lemma}\label{propcro}
Let $\mathbf{A}$ be a matrix. 
If $f$ is holomorphic in the interior of $\mcl{W}(\mathbf{A})$
and continuous in $\overline{\mcl{W}(\mathbf{A})}$, then there exists $0<C\le 1+\sqrt{2}$ such that
\begin{equation}
 \VVert f(\mathbf{A})\VVert\le C\sup_{z\in \mcl{W}(\mathbf{A})}\vert f(z)\vert.
\end{equation}
\end{lemma}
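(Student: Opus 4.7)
The plan is to establish the Crouzeix--Palencia bound $\VVert f(\mathbf{A})\VVert \le (1+\sqrt{2})\sup_{z\in\mcl{W}(\mathbf{A})}|f(z)|$ through a Cauchy-integral plus duality argument. First, I would reduce to a convenient setting by approximation: replace $\mcl{W}(\mathbf{A})$ by a slightly larger strictly convex, smooth, bounded domain $\Omega\supset\mcl{W}(\mathbf{A})$, assume $f$ is holomorphic on an open neighborhood of $\overline{\Omega}$ (so in particular $f\in C(\overline{\Omega})$), and normalize so that $\sup_{z\in\Omega}|f(z)|=1$. The claimed inequality follows by letting $\Omega$ shrink to $\mcl{W}(\mathbf{A})$ and using uniform convergence of the functional calculus on compact sets avoiding the spectrum.

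Next, I would invoke the Dunford--Riesz functional calculus to write
\begin{equation*}
f(\mathbf{A})=\frac{1}{2\pi\iu}\oint_{\partial\Omega}f(\zeta)(\zeta\mathbf{I}-\mathbf{A})^{-1}\;d\zeta,
\end{equation*}
and combine it with the basic resolvent estimate $\VVert(\zeta\mathbf{I}-\mathbf{A})^{-1}\VVert \le 1/\opn{dist}(\zeta,\mcl{W}(\mathbf{A}))$, which in turn is a direct consequence of $\Vert(\zeta\mathbf{I}-\mathbf{A})x\Vert \ge |\blacket{(\zeta\mathbf{I}-\mathbf{A})x,x}| \ge \opn{dist}(\zeta,\mcl{W}(\mathbf{A}))\Vert x\Vert^2$ for unit $x$. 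This alone already yields a bound of the form $\VVert f(\mathbf{A})\VVert \le C_\Omega \sup_\Omega |f|$ with a constant that unfortunately depends on the geometry of $\partial\Omega$ and blows up as $\Omega$ degenerates to $\mcl{W}(\mathbf{A})$.

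To obtain the \emph{universal} constant $1+\sqrt{2}$, I would follow the Crouzeix--Palencia strategy of a ``double-layer'' Cauchy decomposition. For fixed unit vectors $u,v$ in the underlying Hilbert space, split the bilinear form
\begin{equation*}
\blacket{f(\mathbf{A})u,v} = \frac{1}{2\pi\iu}\oint_{\partial\Omega} f(\zeta)\blacket{(\zeta\mathbf{I}-\mathbf{A})^{-1}u,v}\;d\zeta
\end{equation*}
into an analytic piece in $\zeta$, which is bounded by the von Neumann inequality applied to $\mathbf{A}$ (constant $1$), and an anti-analytic correction, which is bounded by the von Neumann inequality applied to $\mathbf{A}^*$. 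The correction is estimated via Cauchy's transform of $f$ and a duality pairing on $\partial\Omega$; a careful Cauchy--Schwarz optimization over this decomposition, combined with an operator-norm identity for the ``two-projection'' term on $H^2(\partial\Omega)$, produces the factor $1+\sqrt{2}$ independently of both $\mathbf{A}$ and $\Omega$.

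The main obstacle is extracting the sharp universal constant. Proving \emph{some} finite bound via the naive resolvent estimate above is essentially routine, but the value $1+\sqrt{2}$ requires the nontrivial Crouzeix--Palencia identification of the analytic/anti-analytic splitting together with a delicate norm computation on Hardy-space projections (and is, in any case, still not known to be optimal -- Crouzeix's conjecture asserts that $2$ suffices). For the present paper the lemma is invoked only as a black box from~\cite{cro17}, and the only feature needed downstream in Proposition~\ref{prop3} is the existence of a finite universal constant $C\le 1+\sqrt{2}$, not its sharpness; accordingly, I would cite \cite{cro17} directly rather than reproducing the argument in detail.
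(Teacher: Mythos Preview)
Your proposal is correct and matches the paper's approach: the paper does not prove this lemma at all but simply attributes it to \cite{cro17} and uses it as a black box, exactly as you recommend in your final paragraph. Your sketch of the Crouzeix--Palencia argument is accurate bonus material, but it goes well beyond what the paper provides.
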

\begin{proof}(Proof of Proposition \ref{prop3})
Let $s_{\opn{min}}(\mathbf{A}):=\min_{\Vert \mathbf{w}\Vert_=1}\Vert \mathbf{Aw}\Vert$ be the minimal singular value of a matrix $\mathbf{A}$.
Since the relation $1/s_{\opn{min}}(\tilde{\mathbf{L}}_S)=\VVert \tilde{\mathbf{L}}_S^{-1}\VVert$ and inclusion $\mcl{W}(\tilde{\mathbf{L}}_S)\subseteq\mcl{W}((\gamma I-K)^{-1})$ hold, the following inequalities hold:
\begin{align}
 &\frac{1}{\Vert Q_S\tilde{\mathbf{K}}_SQ_S^*\phi(\tilde{x}_{t-1})\Vert_k}=\frac{1}{\Vert \tilde{\mathbf{K}}_SQ_S^*\phi(\tilde{x}_{t-1})\Vert}
\le\frac{1}{s_{\opn{min}}(\tilde{\mathbf{K}}_S)\Vert Q_S^*\phi(\tilde{x}_{t-1})\Vert}\nn\\
&\qquad\le \frac{\VVert \tilde{\mathbf{K}}_S^{-1}\VVert}{\Vert Q_S^*\phi(\tilde{x}_{t-1})\Vert}
\le \frac{(1+\sqrt{2})\sup_{z\in\mcl{W}(\tilde{\mathbf{L}}_S)}\vert g(z)\vert}{\Vert Q_S^*\phi(\tilde{x}_{t-1})\Vert}\nn\\
&\qquad \le \frac{(1+\sqrt{2})\sup_{z\in\mcl{W}((\gamma I-K)^{-1})}\vert g(z)\vert}{\Vert Q_S^*\phi(\tilde{x}_{t-1})\Vert}.\nn
\end{align}

For the Arnoldi method, if $K$ is bounded, the identity $Q_S^*KQ_S=\tilde{\mathbf{K}}_S$ holds.
Thus, the inequality $\VVert \tilde{\mathbf{K}}_S^{-1}\VVert\le(1+\sqrt{2})\sup_{z\in\mcl{W}(K)}\vert g(z)\vert$ with $g(z)=1/z$ holds in this case, 
which deduces the same result as inequality~\eqref{eq25}.
\end{proof}
If $\phi(\tilde{x}_{t-1})$ deviates from the Krylov subspace, the norm of the projected vector $Q_SQ_S^*\phi(\tilde{x}_{t-1})$, which is equal to $\Vert Q_S^*\phi(\tilde{x}_{t-1})\Vert$, becomes small.
Proposition~\ref{prop3} implies $a_{t,S}$ becomes large in this case.
On the other hand, if $\phi(\tilde{x}_t)$ is sufficiently close to the Krylov subspace, that is, $\min_{u\in\mcl{V}_S}\Vert \phi(\tilde{x}_{t-1})-u\Vert_k\approx 0$, then we have
\begin{align*}
 1&=\Vert \phi(\tilde{x}_{t-1})\Vert_k^2
 =\Vert \phi(\tilde{x}_{t-1})-Q_SQ_S^*\phi(\tilde{x}_{t-1})\Vert_k^2+\Vert Q_SQ_S^*\phi(\tilde{x}_{t-1})\Vert_k^2\\
 &=\min_{u\in\mcl{V}_S}\Vert \phi(\tilde{x}_{t-1})-u\Vert_k+\Vert Q_SQ_S^*\phi(\tilde{x}_{t-1})\Vert_k^2
 \approx \Vert Q_SQ_S^*\phi(\tilde{x}_{t-1})\Vert_k^2
=\Vert Q_S^*\phi(\tilde{x}_{t-1})\Vert^2, 
\end{align*}
if $k$ satisfies $k(x,x)=1$ for any $x\in\mcl{X}$, for example, the Gaussian and Laplacian kernels. 
As a result, if $\tilde{x}_{t}$ is generated by dynamical system~\eqref{model}, and 
if $\phi(\tilde{x}_{t-1})$ is sufficiently close to $\mcl{V}_S$, then
$a_{t,S}$ is bounded by a reasonable value.
Conversely, if $\tilde{x}_{t}$ is unlikely to be generated by dynamical system \eqref{model}, or 
$\phi(\tilde{x}_{t-1})$ is not close to the subspace $\mcl{V}_S$, then $a_{t,S}$ becomes large.
In the context of anomaly detection, since both the above cases mean $\tilde{x}_{t-1}$ or $\tilde{x}_{t}$ deviates from the regular pattern of times-series $\{\tilde{x}_0,\ldots,\tilde{x}_{T-1}\}$, they should be regarded as abnormal.

In practice, $Q_S$, $\mathbf{R}_S$, and $\tilde{\mathbf{K}}_S$
are approximated by $Q_{S,N}$, $\mathbf{R}_{S,N}$ and $\tilde{\mathbf{K}}_{S,N}$, respectively.
Thus, the following empirical value can be used:
\begin{equation*}
 a_{t,S,N}:=\frac{\Vert\phi(\tilde{x}_t)-
Q_{S,N}\tilde{\mathbf{K}}_{S,N}Q_{S,N}^*\phi(\tilde{x}_{t-1})\Vert_k}{\Vert Q_{S,N}\tilde{\mathbf{K}}_{S,N}Q_{S,N}^*\phi(\tilde{x}_{t-1})\Vert_k}.
\end{equation*}
By Proposition~\ref{prop1}, the following proposition about the convergence of $a_{t,S,N}$ holds:
\begin{prop}\label{prop:abnormality}
The $a_{t,S,N}$ converges to $a_{t,S}$ as $N\to\infty$.
\end{prop}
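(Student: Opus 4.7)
The plan is to propagate the two convergences established in Proposition~\ref{prop1}, namely $\tilde{\mathbf{K}}_{S,N}\to\tilde{\mathbf{K}}_S$ in $\mathbb{C}^{S\times S}$ and $Q_{S,N}\to Q_S$ strongly as operators $\mathbb{C}^S\to\hil_k$, through both the numerator and the denominator of $a_{t,S,N}$, and then use continuity of the norm and of division. Since the two quantities $a_{t,S,N}$ and $a_{t,S}$ differ only through the composition $Q_{S,N}\tilde{\mathbf{K}}_{S,N}Q_{S,N}^{*}\phi(\tilde{x}_{t-1})$ versus $Q_{S}\tilde{\mathbf{K}}_{S}Q_{S}^{*}\phi(\tilde{x}_{t-1})$, it suffices to show that the former converges to the latter in $\hil_k$.

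First, I would upgrade the strong convergence $Q_{S,N}\to Q_S$ to a convergence statement for the adjoints on fixed vectors. Applying $Q_{S,N}$ to the standard basis of $\mathbb{C}^S$ shows that each column $q_{i,N}$ of $Q_{S,N}$ converges in $\hil_k$ to the corresponding column $q_i$ of $Q_S$. By continuity of the inner product $\blacket{\cdot,\cdot}_k$, for every $u\in\hil_k$ the vector $Q_{S,N}^{*}u=(\blacket{u,q_{0,N}}_k,\dots,\blacket{u,q_{S-1,N}}_k)^T$ converges in $\mathbb{C}^S$ to $Q_{S}^{*}u$. Setting $u=\phi(\tilde{x}_{t-1})$ and combining with the matrix convergence of $\tilde{\mathbf{K}}_{S,N}$, I obtain $w_N:=\tilde{\mathbf{K}}_{S,N}Q_{S,N}^{*}\phi(\tilde{x}_{t-1})\to w:=\tilde{\mathbf{K}}_{S}Q_{S}^{*}\phi(\tilde{x}_{t-1})$ in $\mathbb{C}^S$. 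The triangle inequality
\begin{equation*}
\Vert Q_{S,N}w_N-Q_Sw\Vert_k\le \VVert Q_{S,N}\VVert_k\,\Vert w_N-w\Vert+\Vert Q_{S,N}w-Q_Sw\Vert_k
\end{equation*}
together with $\VVert Q_{S,N}\VVert_k=1$ (since $Q_{S,N}$ has orthonormal columns and is thus an isometry into $\hil_k$) makes the first term vanish, while the second vanishes by strong convergence of $Q_{S,N}$ evaluated at the fixed vector $w$.

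Hence $Q_{S,N}\tilde{\mathbf{K}}_{S,N}Q_{S,N}^{*}\phi(\tilde{x}_{t-1})\to Q_{S}\tilde{\mathbf{K}}_{S}Q_{S}^{*}\phi(\tilde{x}_{t-1})$ in $\hil_k$. Continuity of $\Vert\cdot\Vert_k$ then yields convergence of both the numerator $\Vert\phi(\tilde{x}_t)-Q_{S,N}\tilde{\mathbf{K}}_{S,N}Q_{S,N}^{*}\phi(\tilde{x}_{t-1})\Vert_k$ and the denominator $\Vert Q_{S,N}\tilde{\mathbf{K}}_{S,N}Q_{S,N}^{*}\phi(\tilde{x}_{t-1})\Vert_k$ to the corresponding quantities defining $a_{t,S}$, so the quotient converges provided the limiting denominator is nonzero (which is tacit for $a_{t,S}$ to be meaningful). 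The only step that requires any care is transporting the convergence through the composition with $Q_{S,N}$; this is handled cleanly by the uniform bound $\VVert Q_{S,N}\VVert_k=1$, which spares us an appeal to the uniform boundedness principle.
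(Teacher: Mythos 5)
Your proposal is correct and follows essentially the same route as the paper: both propagate the convergences $\tilde{\mathbf{K}}_{S,N}\to\tilde{\mathbf{K}}_S$ and $Q_{S,N}\to Q_S$ (strongly) from Proposition~\ref{prop1} through the composition via a triangle-inequality split, using the uniform isometry bound $\VVert Q_{S,N}\VVert_k=1$ and the convergence $Q_{S,N}^*\phi(\tilde{x}_{t-1})\to Q_S^*\phi(\tilde{x}_{t-1})$, then conclude by continuity of the norm and of division. The only cosmetic differences are that you group the estimate into two terms rather than the paper's three and derive the adjoint convergence from the columns of $Q_{S,N}$ instead of re-running the inner-product argument of Proposition~\ref{prop1}; you also make explicit the tacit nonvanishing of the limiting denominator.
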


\section{Numerical Results}
\label{sec:result}

We empirically evaluate the behavior of the proposed Krylov subspace methods
in Subsection~\ref{secnum1}
then describe their
application to anomaly detection using real-world time-series data in
Subsection~\ref{secnum2}.
\subsection{Comparative Experiment}\label{secnum1}

The behavior of the Arnoldi and shift-invert Arnoldi methods (SIA in the figures) were evaluated
numerically based on the empirical abnormality.
We used 100 synthetic time-series datasets $\{\tilde{x}_0, \ldots, \tilde{x}_T\}$ randomly generated by the following {three dynamical systems}:
\begin{align}
&{x_0=2,}{\quad x_{t+1}=0.9995x_{t}+0.1\xi_t,\label{eqsyn0}}\\
&x_0=0.5,\quad x_{t+1}=0.99x_t\cos(0.1x_t)+\xi_t, \label{eqsyn}\\
&{x_0=0.1,}{\quad x_1=x_0+0.5x_0^3+\xi_1,\quad x_{t+1}=x_{t}+0.5(x_t-x_{t-1})^3+\xi_t,\label{eqsyn2}}
\end{align}
where $\{\xi_t\}$ is i.i.d with $\xi_t\sim\mcl{N}(0,0.01)$.
{For Eq.~\eqref{eqsyn2}, to extract the relationship between $\tilde{x}_t$ and
$\tilde{x}_{t-p+1},\ldots,\tilde{x}_{t-1}$, we set $x_t$ in dynamical system~\eqref{model} as $x_t:=[y_t,\ldots,y_{t-p+1}]$
for random variable $y_t$ at $t$.}
Using the synthetic data, $K$ was first estimated, then 
the empirical abnormalities $a_{t,S,N}$ were computed using all time-series data with $t=1601$, $N=50,75,100$ and $S=1,\ldots,12$.
We chose time points $t=1601$ for evaluation because the estimation of $K$ requires $\{\tilde{x}_0,\ldots, \tilde{x}_{N\times (S+1)}\}$ and $1601>N\times (S+1)$ for all $N=50,75,100$ and  $S=1,\ldots,12$.
The Gaussian kernel was used, and $\gamma=1+1\iu$, where $\iu$ denotes the imaginary unit, was set for the shift-invert Arnoldi method.
{Theoretically, Perron-Frobenius operators are well-defined for any $c_0$-universal kernel.
Thus, any $c_0$-universal kernel is available for our methods.
Therefore, we chose the Gaussian kernel since it is a typical example of $c_0$-universal kernels.
}

For evaluating the behavior of each method along with $S$, the values
\begin{equation*}
\vert a_{t,S,N}-a_{t,S-1,N}\vert,
\end{equation*}
for $S=2,\ldots,12$ were computed with all time-series data, then the averages of all the time-series data were computed.

The results are shown in Figure~\ref{fig2}. 
If $K$ is bounded, and if the Krylov subspace $\mcl{V}(K,\Phi(\mu_0))$ converges to $\hil_k$, that is, $Q_SQ_S^*$ converges strongly to the identity map in $\hil_k$ as $S\to\infty$,
then by Proposition~\ref{prop:conv2}, $a_{t,S}$ computed with the Arnoldi method converges to $\Vert \phi(x_t)-K\phi(x_{t-1})\Vert_k/\Vert K\phi(x_{t-1})\Vert_k$.
Therefore, in this case, $\vert a_{t,S,N}-a_{t,S-1,N}\vert$, the difference between the empirical abnormality with $S$ and that with $S-1$, becomes smaller as $S$ grows.
{Perron-Frobenius operators of systems without noise associated with the Gaussian kernel are shown to be bounded if and only if the system is linear~\citep{iispre}.
Thus, for linear dynamical system~\eqref{eqsyn0} with small noise, the value $\vert a_{t,S,N}-a_{t,S-1,N}\vert$ computed with the Arnoldi method becomes smaller as $S$ grows in the case of $N=50,75$.
However, those for nonlinear dynamical systems~\eqref{eqsyn} and ~\eqref{eqsyn2} do not seem to decrease even if $S$ grows.
This is due to the unboundedness of $K$.
In addition, for dynamical system~\eqref{eqsyn0} and $N=100$, the value $\vert a_{t,S,N}-a_{t,S-1,N}\vert$ computed with the Arnoldi method also does not seem to decrease even if $S$ grows.
This would be because larger $N$ causes numerical instabilities as we mentioned in Remark~\ref{rem:choice_N}.
}
Meanwhile, we can see $\vert a_{t,S,N}-a_{t,S-1,N}\vert$ computed with the shift-invert Arnoldi method decreases as $S$ grows for all three dynamical systems.
This is because $a_{t,S}$ computed with the shift-invert Arnoldi method converges to $\Vert \phi(x_t)-K\phi(x_{t-1})\Vert_k/\Vert K\phi(x_{t-1})\Vert_k$ even if $K$ is unbounded.
The result indicates that the shift-invert Arnoldi method counteracts the unboundedness of Perron-Frobenius operators.
%
\begin{figure}[t]
\begin{center}
\includegraphics[scale=0.39]{./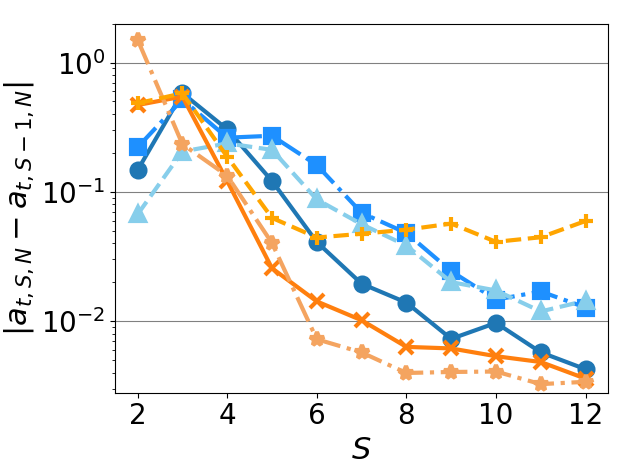}\quad
\includegraphics[scale=0.39]{./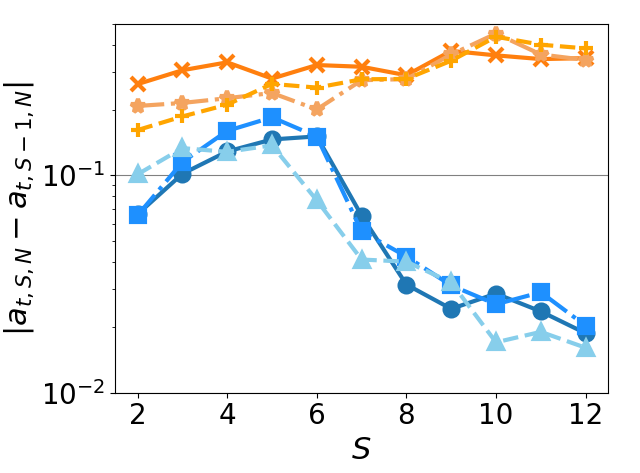}
\includegraphics[scale=0.39]{./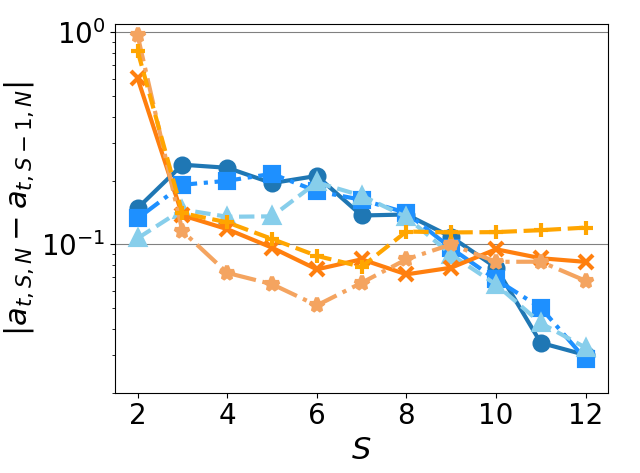}
\includegraphics[scale=0.39]{./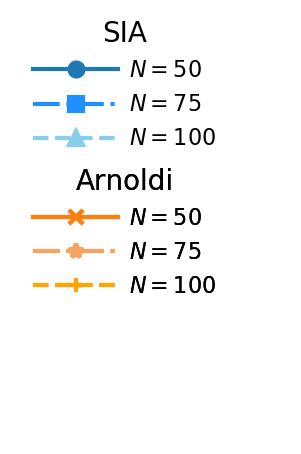}
\caption{{Convergence of the empirical abnormality along $S$ with the synthetic data generated by Eq.~\eqref{eqsyn0} (top left), Eq.~\eqref{eqsyn} (top right), and Eq.~\eqref{eqsyn2} (bottom center).}}\label{fig2}
\end{center}
\end{figure}
\begin{figure}[t]
\begin{center}
\subfigure[$p=15$]{
\includegraphics[scale=0.4]{./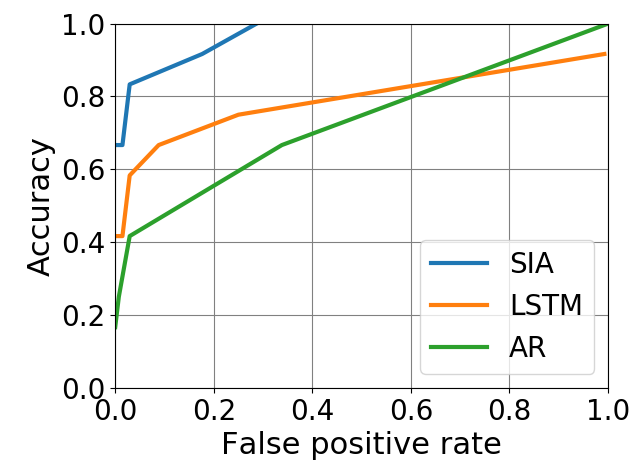}}\hspace{.1cm}
\subfigure[$p=30$]{
\includegraphics[scale=0.4]{./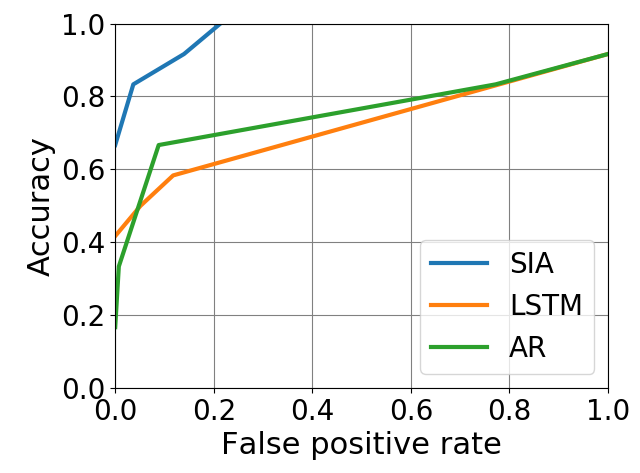}}
 \caption{Accuracy versus false positive rate in anomaly-detection experiments using our shift-invert Arnoldi method and the existing methods: LSTM and AR model}\label{fig6}
\end{center}
\end{figure}
%
\subsection{Anomaly detection with real-world data}\label{secnum2}
We show the empirical results for our shift-invert Arnoldi method in anomaly detection with real-world healthcare data.
We used electrocardiogram (ECG) data~\citep{keoph05}.\footnote{Available at ` \url{http://www.cs.ucr.edu/~eamonn/discords/} '.\\
We used `chfdb{\_}chf01{\_}275.txt', `chfdb{\_}chf13{\_}45590.txt' and `mitdbx{\_}mitdbx{\_}108.txt' in the experiment.}
ECGs are time-series of the electrical
potential between two points on the surface of the body caused by a beating heart.

The graphs in Figure~\ref{fig6} show the accuracy versus the false positive rate for these datasets. 
We first computed $\tilde{\mathbf{K}}_{S,N}$ with $S=10$, $N=40$ 
then computed the empirical abnormality $\hat{a}_{t,S,N}$ for each $t$ with the shift-invert Arnoldi method.
The Laplacian kernel and $\gamma=1.25$ were used.
To extract the relationship between $\tilde{x}_t$ and
$\tilde{x}_{t-p+1},\ldots,\tilde{x}_{t-1}$, we set $x_t$ in dynamical system~\eqref{model} as $x_t:=[y_t,\ldots,y_{t-p+1}]$
for random variable $y_t$ at $t$. In this example, $p$ was set as $p=15,30$,
$\tilde{\mathbf{K}}_{S,N}$ was computed using the data
$\{\tilde{x}_0,\ldots,\tilde{x}_{399}\}$, and
the empirical abnormalities $\hat{a}_{t,S,N}$ at $t=430,431,\ldots$ were computed.
Also, the results obtained using long short-term memory (LSTM)~\citep{pankaj15} and autoregressive (AR) model~\citep{takeuchi06} were evaluated for comparison.
LSTM with $15,30$ time-series and $10$ neurons, with the tanh activation function and the AR model 
$x_{t+1}=\sum_{i=0}^{p-1}c_ix_{t-i}+\xi_t$ with $p=15,30$ 
were used.
The datasets included 12 abnormal parts.
As can be seen, our shift-invert Arnoldi method achieved higher accuracy than LSTM and the AR model while maintaining a low false positive rate for these datasets from complex systems.

\section{Conclusion and Future Work}
\label{sec:conclusion}
In this paper, we addressed a transfer operator to deal with nonlinear dynamical systems with random noise, and developed novel Krylov subspace methods for estimating the transfer operators from finite data. 
For this purpose, we first considered the Perron-Frobenius operators with kernel-mean embeddings for such systems. 
As for the estimation, we extended the Arnoldi method so that it can be applied to the current case. 
Then, we developed the shift-invert Arnoldi method to avoid the problem of the unboundedness of estimated operators because transfer operators on nonlinear systems are not necessarily bounded. 
We also considered an approach of evaluating the prediction accuracy by estimated operators on the basis of the maximum mean discrepancy. 
Finally, we empirically investigated the performance of our methods using synthetic and real-world healthcare data.

In Subsection~\ref{secnum1}, we considered the empirical abnormality which is defined by the prediction error in an RKHS, and showed the convergence of the proposed method as a Krylov subspace method empirically.
As one of our future works, we will address the application of the proposed method to forecasting problems.
If a Perron-Frobenius operator has an eigenvalue whose absolute value is $1$, the corresponding eigenvector describes the time-invariant component of the dynamics.
This fact may be useful for forecasting long-term behaviors of dynamical systems.
Thus, it is expected to be meaningful to consider the eigenvectors of the proposed estimated operator corresponding to eigenvalue $\lambda$ satisfying $\vert\lambda\vert=1$.


\section*{Acknowledgments}
This work was partially supported by JST CREST Grant Number JPMJCR1913.

\appendix




\section{Proofs}\label{ap:pf_kme}

\subsection*{Proof of Lemma~\ref{le:transfer}}
Since $\xi_t$ with $t\in \mathbb{Z}_{\ge 0}$ are i.i.d. and independent of $x_t$, the following identities are derived:
 \begin{align*}
  &K\Phi({x_t}_*P)=\Phi({\beta_t}_*({x_t}_*P\otimes P))
=\int_{\omega\in\varOmega}\int_{x\in\mcl{X}}\phi(\beta_t(x,\omega))\;d{x_t}_*P(x)\;dP(\omega)\nn\\ 
 &\quad=\int_{\omega\in{\varOmega}}\int_{\eta\in{\varOmega}}\phi(h(x_t(\eta))+\xi_t(\omega))\;dP(\eta)\;dP(\omega)
 =\int_{\omega\in{\varOmega}}\phi(h(x_t(\omega))+\xi_t(\omega))\;dP(\omega)\\
 &\quad=\int_{x\in\mcl{X}}\phi(x)\;d(h(x_t)+\xi_t)_*P(x)
=\Phi((h(x_t)+\xi_t)_*P)
=\Phi({x_{t+1}}_*P),\nn 
 \end{align*}
which completes the proof of the lemma.

\subsection*{Proof of Lemma~\ref{le:kme}}
The linearity of $\Phi$ is verified by the definition of $\Phi$.
Next let $\{\mu_N\}_{N=1}^{\infty}$ be a sequence in
$\mcl{M}(\mcl{X})$ such that $\mu=\lim_{N\to\infty}\mu_N$ 
weakly. Then since $k$ is bounded and continuous, the following relations hold:
\begin{align*}
&\Vert\Phi(\mu_N)-\Phi(\mu)\Vert_{k}^2
=\blacket{\Phi\left(\mu_N\right),\Phi\left(\mu_N\right)}_{k}
 -2\Re\blacket{\Phi\left(\mu_N\right),\Phi\left(\mu\right)}_{k}
+\blacket{\Phi\left(\mu\right),\Phi\left(\mu\right)}_{k}\nn\\
 &\qquad=\int_{y\in\mcl{X}}\int_{x\in\mcl{X}}k(x,y)\;d\mu_N(x)\;d\mu_N(y)
-2\Re\int_{y\in\mcl{X}}\int_{x\in\mcl{X}}k(x,y)\;d\mu(x)\;d\mu_N(y)\\
&\phantom{\qquad=\int_{y\in\mcl{X}}\int_{x\in\mcl{X}}k(x,y)\;d\mu_N(x)\;d\mu_N(y)}+\int_{y\in\mcl{X}}\int_{x\in\mcl{X}}k(x,y)\;d\mu(x)\;d\mu(y)\nn\\
 &\qquad\longrightarrow0,
\end{align*}
as $N\rightarrow \infty$, where $\Re z$ for $z\in\mathbb{C}$ is the real part of $z$.
This implies
$\lim_{N\to\infty}\Phi(\mu_N)=\Phi(\mu)$ in $\mathcal{H}_k$. This completes the proof of the lemma.

\subsection*{Proof of Lemma~\ref{le:pf-operator}}
Since each $\xi_t$ for $t\in \mathbb{Z}_{\ge 0}$ is i.i.d. and $K\Phi(\mu)$ is represented as 
$K\Phi(\mu)=\int_{y\in\mcl{X}}\int_{x\in\mcl{X}}\phi(h(x)+y)\;d\mu(x)\;d{\xi_t}_*(y)$,
$K$ does not depend on $t$.

In addition, the identity $\Phi(\delta_x)=\phi(x)$ holds for any $x\in \mathcal{X}$, where $\delta_x$ is the Dirac measure centered at $x\in\mathcal{X}$.
Thus, the inclusion $\opn{Span}\{\phi(x)\mid x\in\mcl{X}\}\subseteq\Phi(\mcl{M}(\mcl{X}))$ holds, which implies $\Phi(\mcl{M}(\mcl{X}))$ is dense in $\hil_k$.
Moreover, according to \cite{sriperumbudur11}, $\Phi$ is injective for $c_0$-universal kernel $k$.
Therefore, the well-definedness of Perron-Frobenius operator $K$, defined as Eq.~\eqref{def} is verified.

Concerning the linearity of $K$, let $c_1,c_2\in\mathbb{C}$ and
$\mu,\nu\in\mcl{M}(\mcl{X})$.
By the linearity of $\Phi$ and the definition of $K$, the following identities hold:
\begin{align*}
 &K\left(c_1\Phi(\mu)+c_2\Phi(\nu)\right)=K\Phi(c_1\mu+c_2\nu)
=\Phi({\beta_t}_*((c_1\mu+c_2\nu)\otimes P))\\
 &\qquad=\Phi(c_1{\beta_t}_*(\mu\otimes P)+c_2{\beta_t}_*(\nu\otimes P))
=c_1\Phi({\beta_t}_*(\mu\otimes P))+c_2\Phi({\beta_t}_*(\nu\otimes P))\\
&\qquad=c_1K\Phi(\mu)+c_2K\Phi(\nu),
\end{align*}
which completes the proof of the lemma.

\subsection*{Proof of Proposition~\ref{prop10}}\label{ap10}
Let $P_{\mid {x_t=x}}$ be a probability measure on $(\varOmega,\mcl{F})$ satisfying $P_{\mid {x_t=x}}(B)=P(B\mid \{x_t=x\})$ for $B\in\mcl{F}$.
Since $p_t$ is the probability density function of $x_t$, the identity $\int_{x\in B}\;d{x_t}_*P(x)=\int_{x\in B}p_t(x)\;d\mu(x)$ holds for any $B\in\mcl{B}$.
Moreover, by the definitions of $p$ and $P_{\mid {x_t=x}}$, the equality $\int_{y\in B}\;d({x_{t+1}}_*P_{\mid {x_t=x}})(y)=\int_{y\in B}p(y\mid x)\;d\mu(y)$ holds for any $B\in\mcl{B}$.
Thus, the following identities are derived:
\begin{align*}
\tilde{\mathscr{K}}_{\opn{RKHS}}\mcl{E}p_t
&=\int_{x\in\mcl{X}}\int_{y\in\mcl{X}}\phi(y)p(y\mid x)p_t(x)\;d\mu(y)\;d\mu(x)\\
&=\int_{x\in\mcl{X}}\int_{y\in\mcl{X}}\phi(y)\;d\left({x_{t+1}}_*P_{\mid {x_t=x}}\right)(y)\;d{x_t}_*P(x).
\end{align*}
Since $x_{t+1}=h(x_t)+\xi_t$, and $x_t$ and $\xi_t$ are independent, the following identities hold for $B\in\mcl{B}$:
\begin{align*}
&\int_{y\in B} \;d\left({x_{t+1}}_*P_{\mid {x_t=x}}\right)(y)
=P\left(\left\{x_{t+1}\in B\right\}\mid \{x_t=x\}\right)
=P\left(\left\{h(x_{t})+\xi_t\in B\right\}\mid \{x_t=x\}\right)\\
&\qquad=\frac{P\left(\left\{h(x)+\xi_t\in B\right\}\bigcap\{x_t=x\}\right)}{P(\{x_t=x\})}
=\frac{P\left(\left\{\xi_t\in B-h(x)\right\}\bigcap\{x_t=x\}\right)}{P(\{x_t=x\})}\\
&\qquad=\frac{P\left(\left\{\xi_t\in B-h(x)\right\}\right)P\left(\{x_t=x\}\right)}{P(\{x_t=x\})}
=P\left(\left\{h(x)+\xi_t\in B\right\}\right)\\
&\qquad=\int_{y\in B} \;d\left({(h(x)+\xi_t)}_*P\right)(y),
\end{align*}
where $B-h(x)$ denotes the set $\{y=z-h(x)\mid z\in B\}$.
Therefore, by the definition of $\beta_t$, the following identities are derived:
\begin{align*}
&\int_{x\in\mcl{X}}\int_{y\in\mcl{X}}\phi(y)\;d\left({x_{t+1}}_*P_{\mid {x_t=x}}\right)(y)\;d{x_t}_*P(x)\\
&\qquad=\int_{x\in\mcl{X}}\int_{y\in\mcl{X}}\phi(y)\;d\left({(h(x)+\xi_t)}_*P\right)(y)\;d{x_t}_*P(x)\\
&\qquad=\int_{x\in\mcl{X}}\phi(x)\;d{\beta_t}_*({x_t}_*P\otimes P)(x)=\Phi\left({\beta_t}_*({x_t}_*P\otimes P\right)).
\end{align*}
By the definition of $K$, the above identities imply $\tilde{\mathscr{K}}_{\opn{RKHS}}\mcl{E}p_t=K\Phi((x_t)_*P)$, which completes the proof of the proposition.

\subsection*{Proof of Proposition~\ref{prop4}}\label{ap6}
By the definition of $K$, the following identities are derived {for $\mu\in\mcl{M}(\mcl{X})$}:
\begin{align*}
{K\Phi(\mu)=\Phi({\beta_t}_*(\mu\otimes P))=\int_{\omega\in\varOmega}\int_{x\in\mcl{X}}\phi(\pi(t,\omega,x))\;d\mu(x)dP(\omega).}
\end{align*}
Let $g\in{\Phi(\mcl{M}(\mcl{X}))}$. Then $g$ is represented as 
{$g=\Phi(\mu)$ with some $\mu\in\mcl{M}(\mcl{X})$}.
Moreover, since $\phi:\mathcal{X}\rightarrow\mathcal{H}_k$ is the feature map, the reproducing property $\blacket{f,\phi(x)}_k=f(x)$ holds for any ${f\in\mcl{D}(\tilde{\mathscr{K}})\subseteq \hil_k}$.
Therefore, the following identities hold:
\begin{align*}
 &\blackets{\tilde{\mathscr{K}}f,g}_{k}=\blackets{\tilde{\mathscr{K}}f,{\Phi(\mu)}}_{k}
={\int_{x\in\mcl{X}}}\blacket{\int_{\omega\in\varOmega}f(\pi(t,\omega,\cdot))\;dP(\omega),\phi(x)}_{k}{d\mu(x)}\\
&\qquad={\int_{x\in\mcl{X}}}\int_{\omega\in\varOmega}f(\pi(t,\omega,x))\;dP(\omega){d\mu(x)}\\
&\qquad =\blacket{f,\int_{\omega\in\varOmega}{\int_{x\in\mcl{X}}}\phi(\pi(t,\omega,x))\;{d\mu(x)}dP(\omega)}_{k} 
=\blacket{f,Kg}_{k},
\end{align*}
which implies that $\tilde{\mathscr{K}}$ is the adjoint operator of ${K}$. This completes the proof of the proposition.





\subsection*{Proof of Proposition \ref{prop9}}\label{ap9}
The left-hand side of assumption~\eqref{eq8} is transformed into
\begin{align*}
   &\lim_{N\to\infty}\frac1N\sum_{i=0}^{N-1}\int_{\omega\in\varOmega}f(h(\tilde{x}_{t+iS'})+\xi_t(\omega))\;dP(\omega)\\
   &\quad=\int_{\omega\in\varOmega}\int_{x\in\mcl{X}}f(h(x)+\xi_t(\omega))\;d{y_t}_*P\;dP(\omega)\\
   &\quad=\int_{x\in\mcl{X}}f(x)\;d{\left(h(y_t)+\xi_t\right)}_*P(x).
\end{align*}
Regarding the right-hand side of assumption~\eqref{eq8}, since $h(\tilde{x}_{t+iS'})+\xi_{t+iS'}(\omega_0)=\tilde{x}_{t+1+iS'}$, the
assumption \eqref{eq8} is equivalent to assumption~\eqref{takeishi_ergodicity} for $\mu={(h(y_t)+\xi_t)}_*P$ and $\tilde{z}_i=\tilde{x}_{t+1+iS'}$.

\subsection*{Proof of Proposition \ref{prop:krylov_extension}}
If $N=1$, then $\mu_{t,N}$ is represented as $\mu_{t,N}=\delta_{\tilde{x}_t}$. Thus, identity 
$\Phi(\mu_{t,N})=\phi(\tilde{x}_t)$ holds.
This implies that in this case, Krylov subspace  
$\mcl{V}_S(K,\Phi(\mu_0))=\opn{Span}\{\Phi(\mu_0),\ldots,\Phi(\mu_{S-1})\}$ is equivalent to Krylov subspace~\eqref{kawahara_krylov}.


\subsection*{Proof of Proposition~\ref{prop:abnormality}}
Since $\VVert Q_S\VVert=\VVert Q_{S,N}\VVert=1$, the following inequalities hold:
\begin{align}
&\Vert Q_{S,N}\tilde{\mathbf{K}}_{S,N}Q_{S,N}^*\phi(\tilde{x}_{t-1})-Q_S\tilde{\mathbf{K}}_SQ_S^*\phi(\tilde{x}_{t-1})\Vert_k\nn\\
&\qquad\le \Vert Q_{S,N}\tilde{\mathbf{K}}_{S,N}Q_{S,N}^*\phi(\tilde{x}_{t-1})-Q_{S,N}\tilde{\mathbf{K}}_{S,N}Q_{S}^*\phi(\tilde{x}_{t-1})\Vert_k\nn\\
&\qquad\qquad+\Vert Q_{S,N}\tilde{\mathbf{K}}_{S,N}Q_{S}^*\phi(\tilde{x}_{t-1})-Q_{S,N}\tilde{\mathbf{K}}_{S}Q_{S}^*\phi(\tilde{x}_{t-1})\Vert_k\nn\\
&\qquad\qquad+\Vert Q_{S,N}\tilde{\mathbf{K}}_{S}Q_{S}^*\phi(\tilde{x}_{t-1})-Q_{S}\tilde{\mathbf{K}}_{S}Q_{S}^*\phi(\tilde{x}_{t-1})\Vert_k\nn\\
&\qquad\le \VVert \tilde{\mathbf{K}}_{S,N}\VVert\Vert Q_{S,N}^*\phi(\tilde{x}_{t-1})-Q_{S}^*\phi(\tilde{x}_{t-1})\Vert_k\nn\\
&\qquad\qquad+\Vert \tilde{\mathbf{K}}_{S,N}Q_{S}^*\phi(\tilde{x}_{t-1})-\tilde{\mathbf{K}}_{S}Q_{S}^*\phi(\tilde{x}_{t-1})\Vert_k\nn\\
&\qquad\qquad+\Vert Q_{S,N}\tilde{\mathbf{K}}_{S}Q_{S}^*\phi(\tilde{x}_{t-1})-Q_{S}\tilde{\mathbf{K}}_{S}Q_{S}^*\phi(\tilde{x}_{t-1})\Vert_k\label{eq40}
\end{align}
Since the elements of $Q_{S,N}^*\phi(\tilde{x}_{t-1})\in\mathbb{C}^{S}$ are composed
of the finite linear combinations of inner products between $\Phi(\mu_{t,N})$ 
and $\phi(\tilde{x}_{t-1})$ in the RKHS, the same discussion as $\mathbf{R}_{S,N}$ and $Q_{S,N}^*\Psi_{1,N}$ in Proposition~\ref{prop1} derives
$Q_{S,N}^*\phi(\tilde{x}_{t-1})\to Q_{S}^*\phi(\tilde{x}_{t-1})$ as $N\to\infty$.
Thus, the first term of Eq.~\ref{eq40} converges to $0$ as $N\to\infty$.
In addition, by Proposition~\ref{prop1}, $\tilde{\mathbf{K}}_{S,N}\to\tilde{\mathbf{K}}_S$ and $Q_{S,N}\to Q_S$ strongly in $\hil_k$ as $N\to\infty$, which implies 
the second and third terms of Eq.~\ref{eq40} also converge to $0$ as $N\to\infty$.
Therefore, $Q_{S,N}\tilde{\mathbf{K}}_{S,N}Q_{S,N}^*\phi(\tilde{x}_{t-1})$ converges to $Q_S\tilde{\mathbf{K}}_SQ_S^*\phi(\tilde{x}_{t-1})$ as $N\to\infty$.
Since the norm $\Vert\cdot\Vert_k$ is continuous, $\Vert Q_{S,N}\tilde{\mathbf{K}}_{S,N}Q_{S,N}^*\phi(\tilde{x}_{t-1})\Vert_k$ and $\Vert \phi(\tilde{x}_{t-1})- Q_{S,N}\tilde{\mathbf{K}}_{S,N}Q_{S,N}^*\phi(\tilde{x}_{t-1})\Vert_k$ converge to $\Vert Q_S\tilde{\mathbf{K}}_SQ_S^*\phi(\tilde{x}_{t-1})\Vert_k$ and $\Vert \phi(\tilde{x}_{t-1})- Q_S\tilde{\mathbf{K}}_SQ_S^*\phi(\tilde{x}_{t-1})\Vert_k$ as $N\to\infty$, respectively.
This implies $a_{t,S,N}\to a_{t,S}$ as $N\to\infty$. 

\section{Computation of QR decomposition of $\Psi_{1,N}$ and $\tilde{\mathbf{K}}_{S,N}$}\label{ap7}
For implementing the Arnoldi method and shift-invert Arnoldi method described in
Section~\ref{sec:krylov}, QR decomposition must be computed.
In this section, we explain the method to compute the QR decomposition.
The orthonormal basis of 
$\opn{Span}\{\Phi(\mu_{0,N}),\ldots,\Phi(\mu_{S-1,N})\}$ for the Arnoldi method or $\opn{Span}\{\sum_{j=0}^i\binom{i}{j}(-1)^{j}\gamma^{i-j}\Phi(\mu_{j})\mid 1\le i\le S\}$ for the shift-invert Arnoldi method, which is denoted as $q_{0,N},\ldots,q_{S-1,N}$, is obtained through 
QR decomposition.
Then, $Q_{S,N}$ is defined as the operator that maps $[y_{0},\ldots,y_{S-1}]\in\mathbb{C}^S$ to $y_0q_{0,N}+\ldots,y_{S-1}q_{S-1,N}$.
The adjoint operator $Q_{S,N}^*$ maps $v\in\hil_k$ to $[\blacket{v,q_{0,N}}_k,\ldots,\blacket{v,q_{S-1,N}}_k]\in\mathbb{C}^S$.

First, the QR decomposition for the Arnoldi method is shown.
For $t=0$, $q_{0,N}$ is set as
$q_{0,N}:=\Phi(\mu_{0,N})/\Vert\Phi(\mu_{0,N})\Vert_k$.
For $t>0$, $q_{t,N}$ is computed using $q_{0,N},\ldots,q_{t-1,N}$ as follows:
\begin{equation}
\begin{aligned}
 \tilde{q}_{t,N}&:=\Phi(\mu_{t,N})-\sum_{i=0}^{t-1}\blacket{\Phi(\mu_{t,N}),q_{i,N}}_kq_{i,N}\\
 q_{t,N}&:=\tilde{q}_{t,N}/\Vert \tilde{q}_{t,N}\Vert_k.
 \end{aligned}\label{eqap1}
\end{equation}
Let the $(i,t)$-element of $\mathbf{R}_{S,N}$ be
$r_{i,t}$, where $r_{i,t}$ is set as $r_{i,t}:=\blacket{\Phi(\mu_{t,N}),q_{i,N}}_k$ for $i<t$, 
$r_{i,t}:=\Vert\tilde{q}_{t,N}\Vert_k$ for $i=t$, $r_{i,t}=0$ for $i>t$.
Then, by Eq.~\eqref{eqap1} and the definition of $\mathbf{R}_{S,N}$, $q_{i,N}$ is represented as
$q_{i,N}=(\Phi(\mu_{i,N})-\sum_{j=0}^{i-1}r_{j,i}q_j)/r_{i,i}$, and $\Psi_{0,N}=Q_{S,N}\mathbf{R}_{S,N}$ holds.
Therefore, by definition of $r_{j,t}\ (j=0,\ldots,i-1)$, $r_{i,t}$ is computed as follows for $i<t$:
\begin{align*}
 r_{i,t}&=\blacket{\Phi(\mu_{t,N}),q_{i,N}}_k
 =\bblacketg{\Phi(\mu_{t,N}),\frac{\Phi(\mu_{i,N})-\sum_{j=0}^{i-1}r_{j,i}q_j}{r_{i,i}}}_k\\
 &=\frac{\blacket{\Phi(\mu_{t,N}),\Phi(\mu_{i,N})}_k-\sum_{j=0}^{i-1}\overline{r_{j,i}}\blacket{\Phi(\mu_{t,N}),q_j}_k}{r_{i,i}}
 =\frac{\blacket{\Phi(\mu_{t,N}),\Phi(\mu_{i,N})}_k-\sum_{j=0}^{i-1}\overline{r_{j,i}}r_{j,t}}{r_{i,i}}.
\end{align*}
Since $\mu_{t,N}=1/N\sum_{j=0}^{N-1}\delta_{\tilde{x}_{t+jS}}$,
$\blacket{\Phi(\mu_{t,N}),\Phi(\mu_{i,N})}_k$ is computed as follows:
\begin{align*}
 &\blacket{\Phi(\mu_{t,N}),\Phi(\mu_{i,N})}_k={\bblacketg{\frac{1}{N}\sum_{j=0}^{N-1}\Phi(\delta_{\tilde{x}_{t+jS}}),\frac{1}{N}\sum_{j=0}^{N-1}\Phi(\delta_{\tilde{x}_{i+jS}})}_k}\\
 &\qquad{=\frac{1}{N^2}\sum_{j,l=0}^{N-1}\blacket{\Phi(\delta_{\tilde{x}_{t+jS}}),\Phi(\delta_{\tilde{x}_{i+lS}})}_k}
 ={\frac{1}{N^2}}\sum_{j,l=0}^{N-1}\blacket{\phi(\tilde{x}_{t+jS}),\phi(\tilde{x}_{i+lS})}_k\\
&\qquad ={\frac{1}{N^2}}\sum_{j,l=0}^{N-1}k(\tilde{x}_{t+jS},\tilde{x}_{i+lS}).
\end{align*}

Similarly, by the definition of $r_{j,t}\ (j=0,\ldots,t-1)$, $r_{t,t}$ is computed, since $\blacket{q_{i,N},q_{j,N}}_k=1$ for $i=j$ and $\blacket{q_{i,N},q_{j,N}}_k=0$ for $i\neq j$ as follows: 
\begin{align*}
r_{t,t}^2&=\blacket{\tilde{q}_{t,N}\tilde{q}_{t,N}}_k\\
&=\bblacketg{\Phi(\mu_{t,N})-\sum_{j=0}^{t-1}\blacket{\Phi(\mu_{t,N}),q_{j,N}}_kq_{j,N}
,\Phi(\mu_{t,N})-\sum_{j=0}^{t-1}\blacket{\Phi(\mu_{t,N}),q_{j,N}}_kq_{j,N}}_k\\
&=\bblacketg{\Phi(\mu_{t,N})-\sum_{j=0}^{t-1}r_{j,t}q_{j,N},\Phi(\mu_{t,N})-\sum_{j=0}^{t-1}r_{j,t}q_{j,N}}_k\\
&=\blackets{\Phi(\mu_{t,N}),\Phi(\mu_{t,N})}_k-2\Re{\bblacketg{\sum_{j=0}^{t-1}r_{j,t}q_{j,N},\Phi(\mu_{t,N})}_k}
+\bblacketg{\sum_{j=0}^{t-1}r_{j,t}q_{j,N},\sum_{j=0}^{t-1}r_{j,t}q_{j,N}}_k\\
&={\frac{1}{N^2}}\sum_{j,l=0}^{N-1}k(\tilde{x}_{t+jS},\tilde{x}_{t+lS})-2\sum_{j=0}^{t-1}r_{j,t}\overline{r_{j,t}}
+\sum_{j=0}^{t-1}r_{j,t}\overline{r_{j,t}}\\
&={\frac{1}{N^2}}\sum_{j,l=0}^{N-1}k(\tilde{x}_{t+jS},\tilde{x}_{t+lS})-\sum_{j=0}^{t-1}\vert r_{j,t}\vert^2,
\end{align*}
where $\Re z$ for $z\in\mathbb{C}$ is the real part of $z$.
The above computations construct $\mathbf{R}_{S,N}$.
Then, since the $(i,t)$ element of $Q_{S,N}^*\Psi_1$ is represented as $\blacket{\Phi(\mu_{t+1,N}),q_{i,N}}_k$, $Q_{S,N}^*\Psi_{1,N}$ is computed in the same manner as $\mathbf{R}_{S,N}$.
The $\tilde{\mathbf{K}}_{S,N}$ is obtained by $Q_{S,N}^*\Psi_1\mathbf{R}_{S,N}^{-1}$.

For the shift-invert Arnoldi method, by Eq.~\eqref{eq30}, the projection space is represented as $\opn{Span}\{w_{1,N},\ldots,w_{S,N}\}$.
Thus, $\blacket{\Phi(\mu_{t,N}),\Phi(\mu_{i,N})}_k$ is replaced with 
$\blacket{w_{t+1,N},w_{i+1,N}}_k=\sum_{l=0}^{i+1}\sum_{j=0}^{t+1}\binom{i+1}{l}(-1)^{j+l}\binom{t+1}{j}\overline{\gamma}^{1+i-l}\gamma^{1+t-j}\blacket{\Phi(\mu_{j,N}),\Phi(\mu_{l,N})}_k$.

\section{Pseudo-codes of Arnoldi and shift-invert Arnoldi methods}\label{ap8}
Let $\mathbf{R}_{S:T}$ be the matrix composed of $r_{i,t}\ (S\le t\le T,\ 0\le i\le S-1)$.
The pseudo-codes for computing $\tilde{\mathbf{K}}_S$ with the Arnoldi method and  shift-invert Arnoldi method are shown in Algorithms~\ref{al:arnoldi} and \ref{al:sia}, respectively.
\begin{algorithm}[t]
\caption{Arnoldi method for Perron-Frobenius operator $K$ in an RKHS}\label{al:arnoldi}
 \begin{algorithmic}[1]
 \Require $S,N\in\mathbb{N},\ \{\tilde{x}_0,\ldots,\tilde{x}_{NS-1}\}$
 \Ensure $\tilde{\mathbf{K}}_{S,N}$
  \For{$t=0,\ldots,S$}
  \For{$i=0,\ldots,S-1$}
  \If{$i<t$}
  \State $r_{i,t}=({1/{N^2}}\sum_{j,l=0}^{N-1}k(\tilde{x}_{t+lS},\tilde{x}_{i+jS})-\sum_{j=0}^{i-1}\overline{r_{j,i}}r_{j,t})/r_{i,i}$
  \ElsIf{$i=t$}
  \State $r_{t,t}=\sqrt{{1/{N^2}}\sum_{j,l=0}^{N-1}k(\tilde{x}_{t+jS},\tilde{x}_{t+lS})-\sum_{j=0}^{t-1}\vert r_{j,t}\vert^2}$
  \Else
  \State $r_{i,t}=0$
  \EndIf
  \EndFor
  \EndFor
  \State $\tilde{\mathbf{K}}_{S,N}=\mathbf{R}_{1:S}\mathbf{R}_{0:S-1}^{-1}$
 \end{algorithmic}
\end{algorithm}
\begin{algorithm}[t]
\caption{Shift-invert Arnoldi method for Perron-Frobenius operator $K$ in an RKHS}\label{al:sia}
 \begin{algorithmic}[1]
  \Require $S,N\in\mathbb{N},\ \gamma\notin\varLambda(K),\ \{\tilde{x}_0,\ldots,\tilde{x}_{NS-1}\}$
 \Ensure $\tilde{\mathbf{K}}_{S,N}$
  \For{$t=0,\ldots,S-1$}
  \For{$i=0,\ldots,t$}
  \State $g_{i,t}={1/{N^2}}\sum_{j,l=0}^{N-1}k(\tilde{x}_{i+jS},\tilde{x}_{t+lS})$
  \EndFor
  \EndFor
   \For{$t=0,\ldots,S-1$}
  \For{$i=0,\ldots,S-1$}
    \If{$i<t$}
  \State $r_{i,t}=(\sum_{l=0}^{i+1}\sum_{j=0}^{t+1}\binom{i+1}{l}\binom{t+1}{j}(-1)^{j+l}\overline{\gamma}^{1+i-l}\gamma^{1+t-j}g_{j,l}-\sum_{j=0}^{i-1}\overline{r_{j,i}}r_{j,t})/r_{i,i}$
  \ElsIf{$i=t$}
  \State $r_{t,t}=\sqrt{\sum_{j,l=0}^{t+1}\binom{t+1}{j}\binom{t+1}{l}(-1)^{j+l}\gamma^{1+t-j}\overline{\gamma}^{1-t-l}g_{j,l}-\sum_{j=0}^{t-1}\vert r_{j,t}\vert^2}$
  \Else
  \State $r_{i,t}=0$
  \EndIf
  \EndFor
  \EndFor
  \For{$i=0,\ldots,S-1$}
  \State $\tilde{r}_i=(\sum_{l=0}^{i+1}\binom{i+1}{l}(-1)^{l}\overline{\gamma}^{i+1-l}g_{0,l}-\sum_{j=0}^{i-1}\overline{r_{j,i}}\tilde{r}_j)/r_{i,i}$
  \EndFor
  \State $\tilde{\mathbf{K}}_{S,N}=[\tilde{r}\ \mathbf{R}_{0:S-2}]\mathbf{R}_{0:S-1}^{-1}$
 \end{algorithmic}

\end{algorithm}


\end{document}